\documentclass[lettersize,journal]{IEEEtran}
\usepackage{amsmath,amsfonts}
\usepackage{algorithmic}
\usepackage{algorithm}
\usepackage{array}
\usepackage[caption=false,font=normalsize,labelfont=sf,textfont=sf]{subfig}
\usepackage{textcomp}
\usepackage{stfloats}
\usepackage{url}
\usepackage{verbatim}
\usepackage{graphicx}
\usepackage{cite}
\usepackage{xspace}
\usepackage{color}
\usepackage{booktabs}
\usepackage{tcolorbox}
\usepackage{arydshln}

\usepackage{enumitem}
\usepackage{multirow}
\usepackage{amsmath,amsfonts,bm,amsthm}
\usepackage{amssymb}

\newtheorem{proposition}{Proposition}
\newtheorem{remark}{Remark}
\newtheorem{theorem}{Theorem}

\usepackage{xcolor}

\usepackage{amsmath,amsfonts,bm,amsthm}
\renewcommand\vec[1]{\ensuremath\boldsymbol{#1}}

\newcommand{\mX}{\bm{X}}

\newcommand{\vx}{\bm{x}}

\newcommand{\mbrp}[1]{\mathbb{R}_{+}^{#1}}
\newcommand{\mbr}[1]{\mathbb{R}^{#1}}

\newcommand{\tAnb}{\mathcal{A}}

\newcommand{\idx}[1]{\mathcal{I}_{#1}}

\newcommand{\vu}{\bm{u}}

\newcommand{\vpsi}{\boldsymbol{\psi}}

\newcommand{\mPsi}{\vec{\Psi}}

\DeclareMathOperator*{\argmin}{arg\,min}

\DeclareMathOperator*{\softming}{SoftMin_{\gamma}}

\DeclareMathOperator*{\softminsel}{SoftMinSel_{\gamma}}

\newcommand{\mPi}{{\boldsymbol\Pi}}

\newcommand{\vw}{\boldsymbol{w}}

\def\eg{\emph{e.g.}}

\newcommand{\cov}{\boldsymbol{\Sigma}}

\newcommand{\mPhi}{\boldsymbol{\Phi}}

\newcommand{\mM}{\boldsymbol{M}}

\newcommand{\mD}{\boldsymbol{D}}

\newcommand{\vmu}{\boldsymbol{\mu}}

\newcommand{\stkout}[1]{{\ifmmode\text{\sout{\ensuremath{#1}}}\else\sout{#1}\fi}}

\newcommand{\commentt}[1]{}

\makeatletter
\DeclareRobustCommand\onedot{\futurelet\@let@token\@onedot}
\def\@onedot{\ifx\@let@token.\else.\null\fi\xspace}

\def\eg{\emph{e.g}\onedot} 
\def\ie{\emph{i.e}\onedot} 
 
 \def\vs{\emph{vs}\onedot}
\def\wrt{w.r.t\onedot} 

\makeatother

\begin{document}

\title{Uncertainty-DTW for Sequences and Visual Tokens %Uncertainty-Aware Alignment: A Probabilistic Framework for Sequences and Visual Tokens
}

\author{Lei Wang, Syuan-Hao Li, Yongsheng Gao, Piotr Koniusz
\thanks{
This work is supported in part by the Australian Research Council (ARC) under Industrial Transformation Research Hub Grant IH180100002 (Corresponding authors: Yongsheng Gao and Piotr Koniusz).

Lei Wang, Syuan-Hao Li, and Yongsheng Gao are with the School of Engineering and Built Environment, Electrical and Electronic Engineering, Griffith University (email: l.wang4@griffith.edu.au, syuan-hao.li@griffithuni.edu.au, yongsheng.gao@griffith.edu.au).

% Syuan-Hao Li is with the School of Engineering and Built Environment, Department of Electrical and Electronic Engineering, Griffith University (email: syuan-hao.li@griffithuni.edu.au).

Piotr Koniusz is with School of Computer Science and Engineering, University of New South Wales (email: piotr.koniusz@unsw.edu.au).

% Yongsheng Gao is with the School of Engineering and Built Environment, Electrical and Electronic Engineering, Griffith University (email: yongsheng.gao@griffith.edu.au).

% Corresponding author: Yongsheng Gao.
}}

% The paper headers
% \markboth{Journal of \LaTeX\ Class Files,~Vol.~14, No.~8, August~2021}%
% {Shell \MakeLowercase{\textit{et al.}}: A Sample Article Using IEEEtran.cls for IEEE Journals}

% \IEEEpubid{0000--0000/00\$00.00~\copyright~2021 IEEE}
% Remember, if you use this you must call \IEEEpubidadjcol in the second
% column for its text to clear the IEEEpubid mark.

\maketitle

\begin{abstract}
Aligning structured data is a fundamental problem in computer vision and machine learning, underlying tasks such as time series analysis, human action recognition, and visual representation learning. Existing alignment methods, including Dynamic Time Warping (DTW) and its differentiable variants, rely on deterministic similarity measures and are therefore sensitive to heterogeneous and noisy features.
In this work, we introduce uncertainty-aware alignment, a probabilistic framework that models pairwise correspondences with heteroscedastic uncertainty and performs structured matching along alignment paths. Our formulation,  uncertainty-DTW (uDTW), assigns each correspondence a Normal distribution and parametrizes each alignment path by a Maximum Likelihood Estimate objective consisting of (i) a precision-weighted matching term that suppresses unreliable features, and (ii) a log-variance regularization that prevents degenerate solutions. This yields a probabilistic alignment mechanism that is robust to noise and interpretable, as uncertainty directly reflects the reliability of matches.
We further generalize this framework from temporal sequences to tokenized visual representations, enabling structured matching over sets of visual tokens. The learned uncertainty can be interpreted as a reverse-attention: semantically relevant regions exhibit low uncertainty and dominate the alignment, while ambiguous/noisy regions have high uncertainty. This provides a connection between alignment, attention, and uncertainty modeling.
We evaluate the proposed framework across diverse domains, including time series forecasting, Fr\'echet mean estimation, few-shot action recognition, and few-shot image classification on generic, fine-grained, and ultra-fine-grained benchmarks. % using vision transformer features. 
The results demonstrate consistent improvements over state-of-the-art methods and show that learned uncertainty correlates with semantic importance. These findings establish uncertainty-aware alignment as a general, robust, and interpretable framework for learning from structured data.
\end{abstract}

\begin{IEEEkeywords}
Probabilistic modeling, few-shot learning, time series analysis, visual tokens 
\end{IEEEkeywords}

\section{Introduction}
\label{sec:intro}

\begin{figure}[tbp]
\vspace{-0.2cm}
\centering
\subfloat[Matched pair]{
\includegraphics[trim=0 0.5cm 0 0,clip=true,
width=0.498\linewidth]{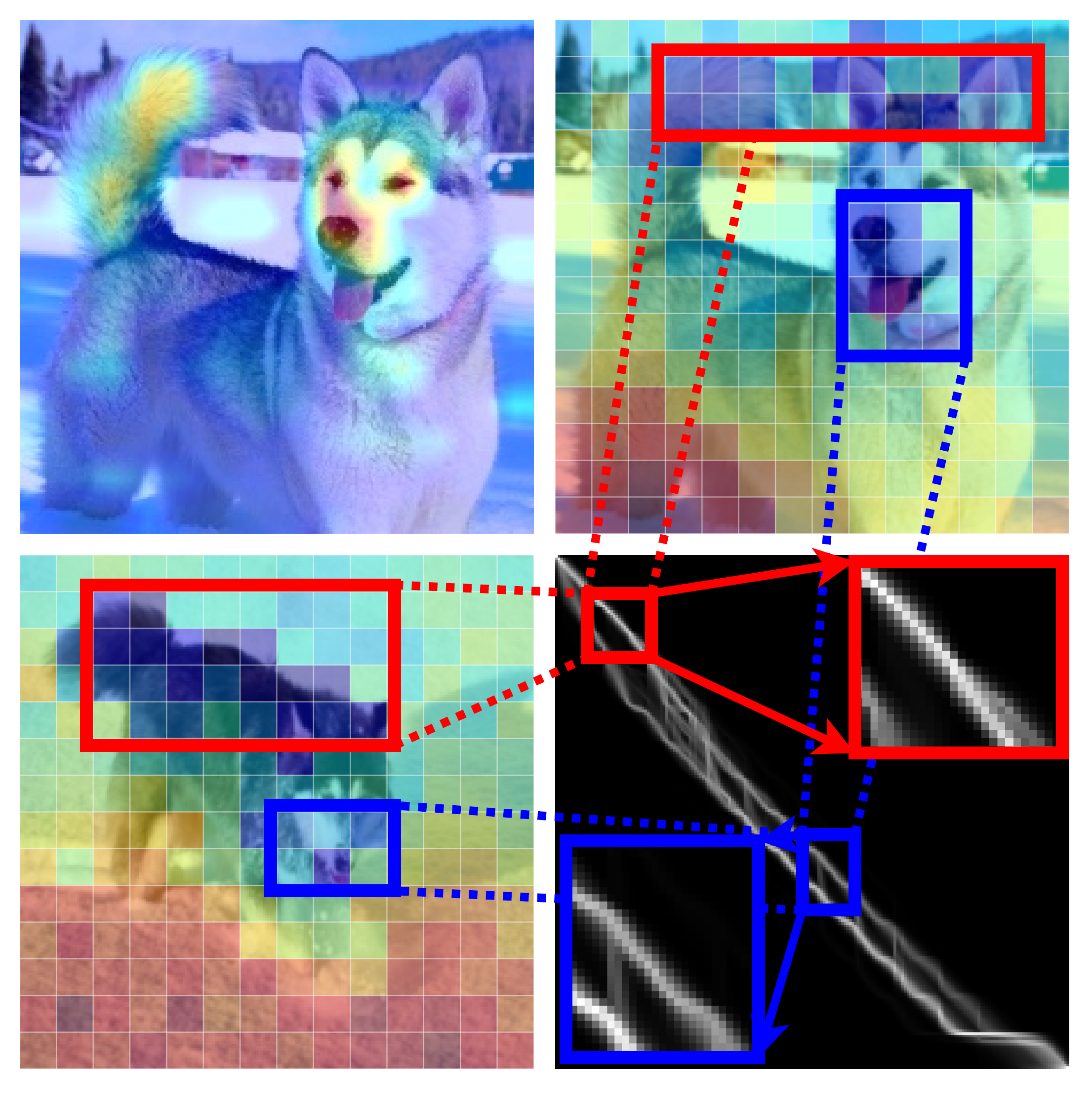}}
% \hfill
\subfloat[Unmatched pair]{%
\raisebox{0.55mm}{%
\includegraphics[trim=0 0.5cm 0 0,clip=true,
width=0.498\linewidth]{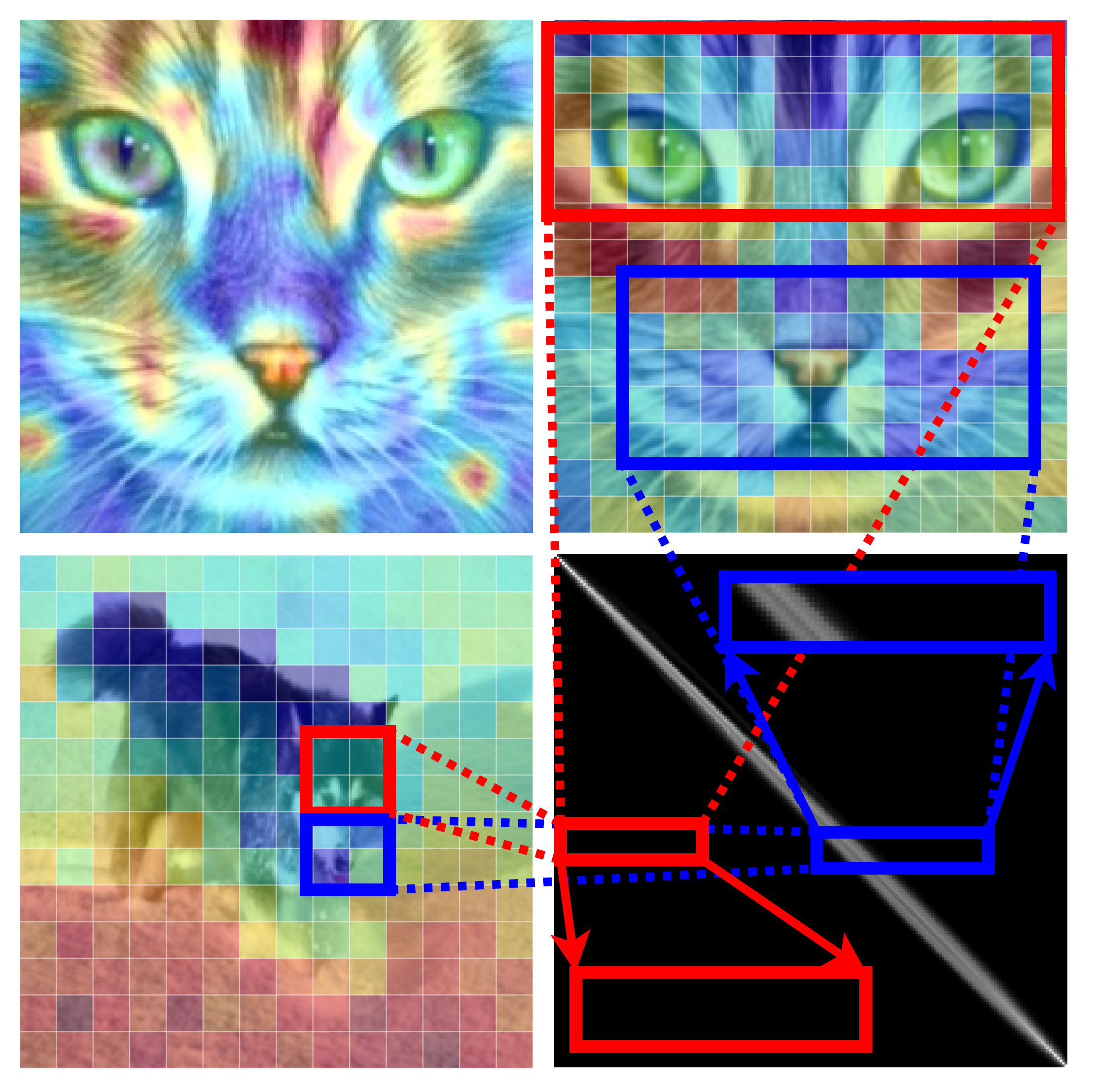}}}
\vspace{-0.1cm}
\caption{
A  connection between alignment, attention, and uncertainty: 
({\em a}) matched pair; ({\em b}) unmatched pair. In each group, the query (top right) and support (bottom left) images are overlaid with learned uncertainty (jet colormap: blue - low uncertainty, red - high uncertainty). The top left shows the DAAM attention map~\cite{daam_pr} (brighter = higher importance), and the bottom right shows uDTW alignment paths. 
Uncertainty exhibits a reverse-attention effect: low uncertainty regions align with high-attention areas and dominate the alignment, while high-uncertainty regions are suppressed. Selected patches (red/blue boxes) with zoomed-in alignment spaces further illustrate this behavior: matched pairs show actived alignment paths, especially in low uncertainty regions, whereas unmatched pairs exhibit weaker and less activated paths, indicating the absence of meaningful correspondence.
}
\label{fig:unc-attn-align}
% \vspace{-0.5cm}
\end{figure}

\IEEEPARstart{A}{ligning} structured data is a fundamental problem in computer vision and machine learning, underpinning a wide range of tasks such as time series forecasting, action recognition, and visual representation learning \cite{marco2011icml, marco2017icml, wang2022temporal, wang2022uncertainty, wang2024meet}. A central challenge in these problems is to establish reliable correspondences between elements of structured inputs, whether they are temporal samples, human body joints, or visual tokens, despite variability in dynamics, appearance, and noise. 
Among existing approaches, Dynamic Time Warping (DTW) \cite{marco2011icml} and its differentiable variants \cite{marco2017icml, pmlr-v130-blondel21a} have been widely adopted due to their ability to compute optimal alignments under flexible structural constraints. However, these methods rely on deterministic similarity measures and select correspondences based solely on minimum distance, making them sensitive to heterogeneous feature quality and prone to being dominated by noisy or unreliable observations.

% To address this limitation, o
Our prior work \cite{wang2022uncertainty} introduced uncertainty-DTW (uDTW), a probabilistic formulation of differentiable DTW that incorporates heteroscedastic aleatoric uncertainty into sequence alignment. By modeling each pairwise correspondence with a Gaussian distribution, uDTW reweights alignment costs according to feature reliability while regularizing uncertainty through a Maximum Likelihood Estimation (MLE) formulation over alignment paths. This leads to more robust alignments and improved performance on time series and skeleton-based sequence tasks. Nevertheless, this formulation was originally developed in the context of temporal sequences, and its broader implications for visual data alignment and representation learning remain underexplored.

In this work, we revisit uDTW from a broader perspective and generalize it into an uncertainty-aware alignment framework. We show that uDTW can be interpreted as a form of probabilistic structured matching, where alignment is governed by a log MLE function that jointly accounts for similarity and uncertainty. This perspective shows that uncertainty plays a central role in modulating correspondences: unreliable features are downweighted under high variance, while overconfident matches are penalized by the variance regularization term. As a result, the framework % naturally 
achieves robustness to heteroscedastic noise and provides interpretable measures of correspondence reliability. Furthermore, this formulation establishes conceptual connections between alignment, attention mechanisms, and optimal transport, positioning uDTW as a principled alternative for structured matching under uncertainty.

Building on this foundation, we extend uncertainty-aware alignment beyond temporal sequences to tokenized visual representations, such as patch tokens extracted from Vision Transformers (ViTs). Visual tokens form high-dimensional sets with complex semantic structure, where identifying meaningful correspondences is  challenging and critical. We show that uDTW enables matching over such ordered token sets, and that the learned uncertainty exhibits a striking and consistent behavior: tokens corresponding to semantically relevant regions tend to have low uncertainty and dominate the alignment, while ambiguous or noisy regions are suppressed. This gives rise to a reverse-attention effect, in which uncertainty acts as a reliability-aware gating mechanism that emphasizes informative correspondences \textit{without requiring explicit attention modeling}. This property provides a new lens for understanding alignment in modern visual representations. Fig. \ref{fig:unc-attn-align} shows how uDTW-based token-to-token alignment and learned uncertainty relate (inversely) to attention.

\begin{figure*}[tbp]
\vspace{-0.5cm}
\centering
\subfloat[$\text{sDTW}_{\gamma=0.01}$\label{fig:sdtw_0.01}]{
\includegraphics[trim=2.3cm 0.258cm 2.3cm 0.258cm,clip=true,
width=0.195\linewidth,height=1.8cm]{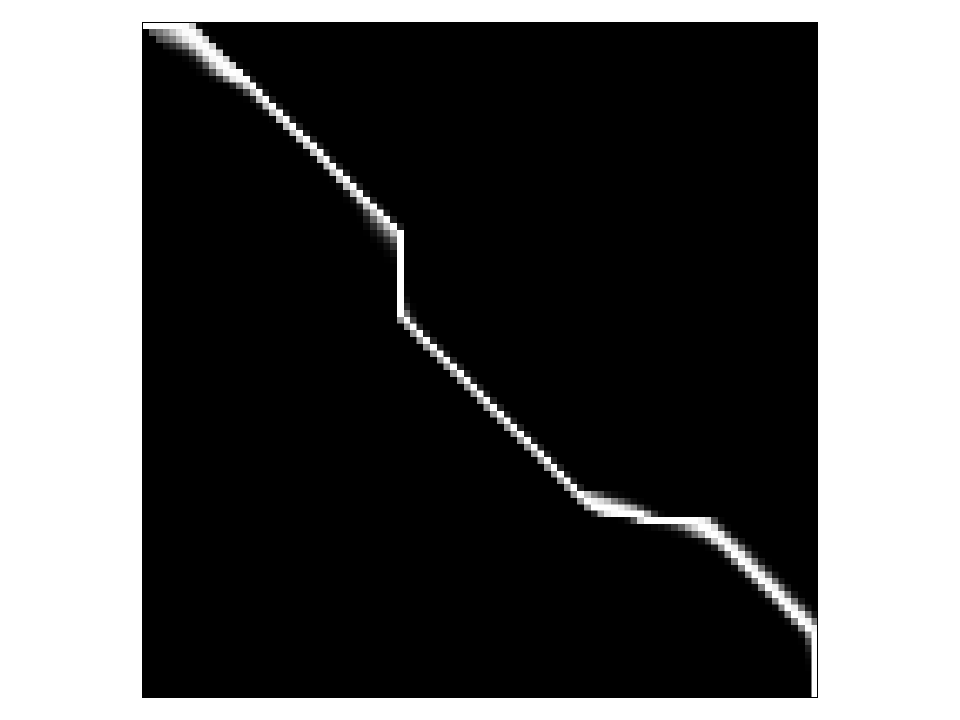}}
\subfloat[$\text{sDTW}_{\gamma=0.1}$\label{fig:sdtw_0.1}]{
\includegraphics[trim=2.3cm 0.258cm 2.3cm 0.258cm,clip=true,
width=0.195\linewidth,height=1.8cm]{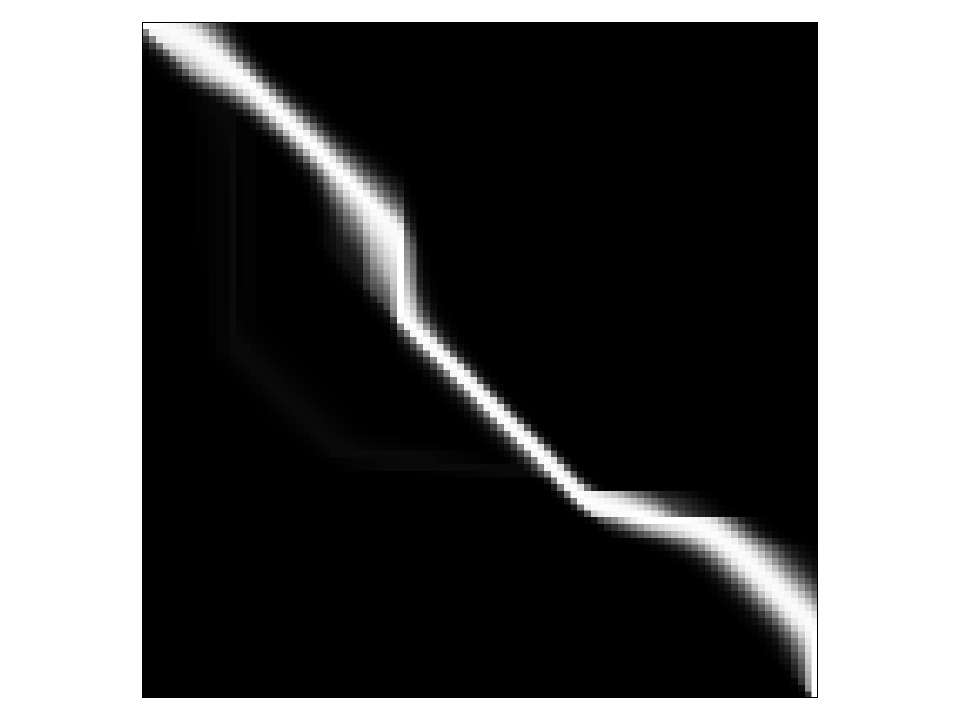}}
\subfloat[$\text{uDTW}_{\gamma=0.01}$\label{fig:udtw_0.01}]{
\includegraphics[trim=2.3cm 0.258cm 2.3cm 0.258cm,clip=true,
width=0.195\linewidth,height=1.8cm]{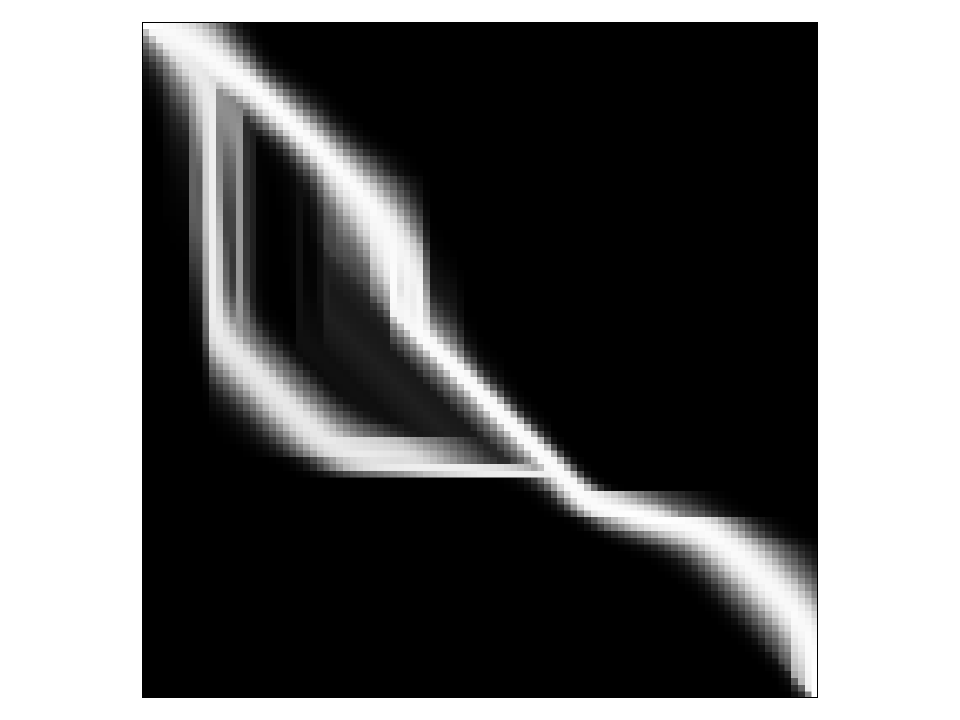}}
\subfloat[$\text{uDTW}_{\gamma=0.1}$\label{fig:udtw_0.1}]{
\includegraphics[trim=2.3cm 0.255cm 2.3cm 0.255cm,clip=true,
width=0.195\linewidth,height=1.8cm]{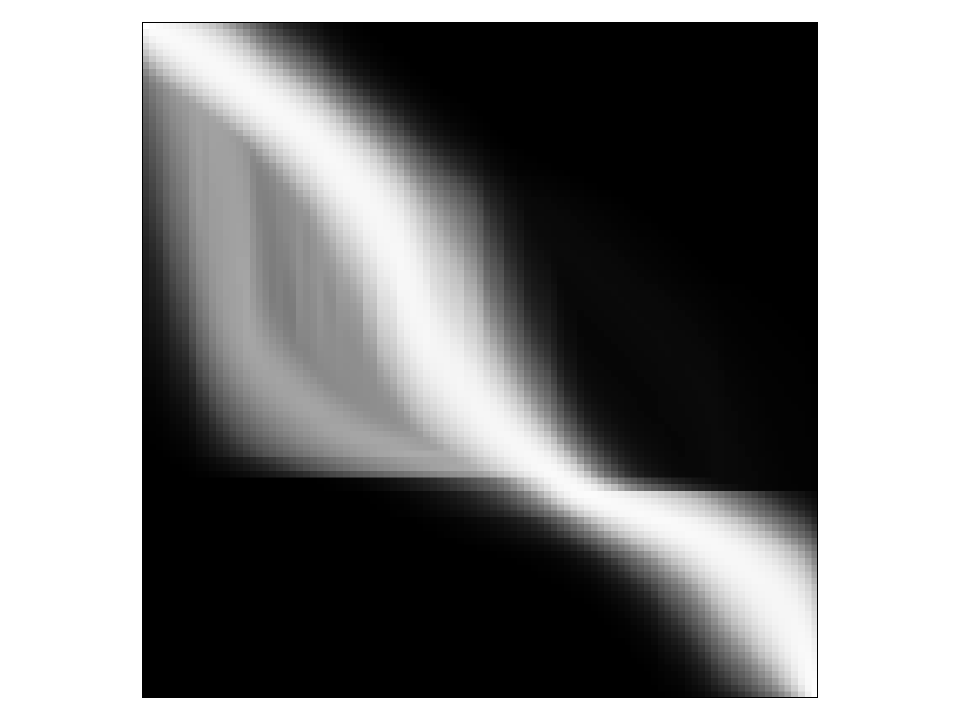}}
\subfloat[uDTW uncertainty\label{fig:udtw_cov_hiscounts}]{
\includegraphics[trim=8cm 3.48cm 8cm 3.88cm,clip=true,
width=0.195\linewidth,height=1.8cm]{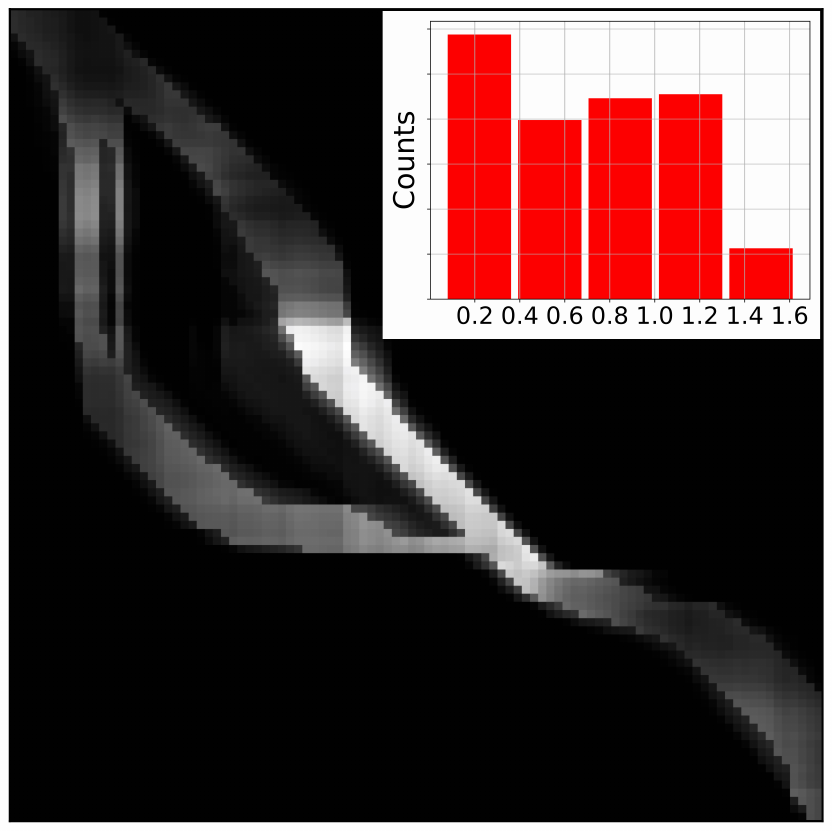}}

% \caption{Plots (a)-(d) show  paths of sDTW and uDTW (in white) for a  pair of sequences. We power-normalized pixels of plots (by the power of 0.1) to see also darker paths better.  
% With higher $\gamma$ that controls softness, in (b) \& (d) more paths  become `active' (fuzzy effect). 
% In (c), uDTW has two possible routes \vs sDTW (a) due to uncertainty modeling. 
% In (e), we visualise uncertainty $\cov$. We binarize plot (c) and multiply it by the $\cov$ to display uncertainty values on the path (white pixels = high uncertainty). The middle of the main path is deemed uncertain, which explains why an additional path merges in that region with the main path. See also the histogram of values of $\cov$.}
\caption{(a)-(d) Alignment paths of sDTW and uDTW (white) for a pair of sequences. Pixel intensities are power-normalized (exponent 0.1) to enhance low-visibility paths. Increasing the softness parameter $\gamma$ ((b),(d)) activates multiple paths, producing a smoother, distributed alignment. Compared to sDTW (a), uDTW (c) exhibits multiple plausible routes due to uncertainty-aware weighting. 
(e) Visualization of the uncertainty matrix $\cov$. The binarized path from (c) is overlaid with $\cov$ to reveal uncertainty along the alignment (white indicates high uncertainty). Elevated uncertainty in the central region explains the emergence and merging of alternative paths. The histogram summarizes the distribution of $\cov$ values.}
\label{fig:visual}
% \vspace{-0.3cm}
\end{figure*}

We validate the proposed framework across a diverse set of domains, including time series analysis, few-shot action recognition, and few-shot image classification. % on generic, fine-grained, and ultra-fine-grained benchmarks using ViT features. 
By unifying these tasks under a single alignment perspective, we demonstrate that uncertainty-aware alignment consistently improves performance while offering interpretable insights into the structure of the data. These results highlight the generality and effectiveness of modeling uncertainty in  matching problems.
This paper extends our earlier conference work \cite{wang2022uncertainty} from a sequence-specific method to a general framework for ordered data alignment. The main \textbf{contributions} are:
% A unified probabilistic alignment framework.
% Conceptual connections to major matching paradigms.
% Extension to visual token matching.
% Comprehensive evaluation across domains.
\renewcommand{\labelenumi}{\roman{enumi}.}
\begin{enumerate}[leftmargin=0.6cm]
\item We generalize uDTW into an uncertainty-aware alignment formulation that performs  matching via alignment paths with heteroscedastic uncertainty, providing robustness to noise and an interpretable likelihood-based perspective.
\item We establish links between uncertainty-aware alignment, attention mechanisms, and optimal transport, and show that uncertainty is a reliability-aware matching that differs %fundamentally 
from deterministic similarity-based approaches.
\item We extend the framework from temporal sequences to tokenized visual representations, enabling alignment over ViT features, revealing a reverse-attention effect driven by uncertainty learning.
\item We conduct extensive experiments on time series, skeleton sequences, and image classification (generic, fine-grained, and ultra-fine-grained, under supervised and unsupervised few-shot settings), demonstrating consistent improvements over state-of-the-art methods and providing new insights into uncertainty and semantic importance.
\end{enumerate}

\section{Related Work}

Below, we review the most closely related work and clearly highlight how our approach differs from prior methods.

% Existing approaches to alignment and matching are primarily based on deterministic similarity (DTW, attention), global transport (OT), or implicit robustness mechanisms. 
% In contrast, we propose a unified \emph{uncertainty-aware alignment framework} that (i) models correspondence reliability explicitly via heteroscedastic uncertainty, (ii) performs structured matching through alignment paths, and (iii) generalizes across domains from temporal sequences to visual tokens. This combination of probabilistic modeling, structural constraints, and interpretability distinguishes our work from prior approaches.

% \vspace{0.1cm}
\noindent\textbf{DTW and differentiable alignment.} DTW \cite{marco2011icml} is a classical method for aligning sequences under temporal distortions via dynamic programming. Its ability to compute optimal alignment between two sequences has made it a fundamental tool in time series analysis, speech processing, and action recognition. However, standard DTW relies on deterministic pairwise distances and selects a single minimum-cost path, making it sensitive to noise and local mismatches. 
To enable integration with modern deep learning frameworks, differentiable variant of DTW, called Soft-DTW (sDTW) \cite{marco2017icml} replaces the ``hard minimum'' calculations with a smooth approximation, enabling gradient-based optimization. Subsequent works \cite{pmlr-v130-blondel21a, dvornik2021drop, wang2022temporal, su2022temporal, wang2024meet} have explored differentiable alignment layers via implicit differentiation and declarative formulations, as well as efficient approximations for large-scale applications. Despite these advances, existing DTW-based models use  \emph{deterministic similarity calculations} as they treat features at each timestep as being equally reliable, and lack mechanism to model uncertainty in pairwise correspondences.
In contrast, our work introduces a \emph{probabilistic formulation of alignment}, where each correspondence is associated with %heteroscedastic 
uncertainty, leading to uncertainty-aware matching objective that adaptively downweights unreliable features to form robust alignment.

% \vspace{0.1cm}
\noindent\textbf{Attention- and learning-based correspondence.} Attention mechanisms \cite{dosovitskiy2020image,rao2021dynamicvit, chen2024motion} are the dominant paradigm for learning correspondences in sequences and visual representations. 
By computing similarity-based weights (\eg, via softmax), attention enables flexible, dense matching between elements and has been widely adopted in transformers and related architectures. Variants have also been explored for sequence alignment, integrating attention into warping or matching processes \cite{cai2019dtwnet, iwana2020dtw, matsuo2023deep}.
However, attention mechanisms are inherently \emph{similarity-centric}: they assign weights based on relative similarity scores without explicitly modeling the reliability of features. As a result, attention may amplify spurious correspondences when similarity estimates are noisy or ambiguous.
Our approach differs fundamentally in both structure and mechanism. Firstly, instead of dense matching, we perform \emph{structured alignment via paths}, preserving matching order constraints. Secondly, uncertainty is explicitly modeled to directly model matching costs. This results in a distinct behavior: unreliable correspondences are suppressed rather than amplified, giving rise to a \emph{reverse-attention effect}, where semantically relevant regions dominate the alignment. %This establishes uncertainty-aware alignment as a principled alternative to similarity-based attention.

% \vspace{0.1cm}
\noindent\textbf{Optimal transport and structured matching.} Optimal Transport (OT) provides a powerful framework for matching distributions by computing a global transport plan between two distributions \cite{montesuma2024recent}. Regularized OT, \eg, Sinkhorn \cite{cuturi2013sinkhorn} enable scalable and differentiable matching in a variety of vision and learning problems. Recent works \cite{lee2019hierarchical, janati2020spatio} have explored connections between OT and sequence alignment, including hybrid formulations that combine OT with temporal constraints.
A key distinction lies in the structure of the matching. OT computes \emph{global correspondences} without enforcing strict ordering constraints, whereas DTW enforces that matching indexes between two sequences always progress forward (or sometimes stay still) in each matching step, albeit at a variable rate. Thus, generic OT ignores this structural constraint inherent to sequentially ordered data.
% 
%Our method follows such \emph{path constraints under uncertainty-aware matching}. 
In contrast, uDTW performs structured matching under monotonic constraints on indexes while incorporating heteroscedastic uncertainty. % into the alignment objective. %This bridges sequence alignment and distribution matching, providing a unified perspective that combines structural consistency with probabilistic modeling of correspondence reliability.

% \vspace{0.1cm}
\noindent\textbf{Robust alignment and uncertainty modeling.} Robust alignment under noise and variability has been widely studied. Classical approaches introduce constraints such as warping windows or global penalties to regularize alignments \cite{salvador2007toward, senin2008dynamic, marteau2008time}. More recent methods \cite{hoffer2015deep, cai2019dtwnet, matsuo2023deep} explore parametric warping functions, diffeomorphic transformations, or learned similarity metrics to improve robustness.
Despite these efforts, robustness is typically achieved \emph{implicitly} through model design rather than by explicitly modeling uncertainty. Consequently, these methods do not provide a principled measure of matching reliability and remain sensitive to heteroscedastic noise.
In contrast, our approach explicitly models \emph{aleatoric uncertainty} at the level of pairwise correspondences. This results in a maximum-likelihood formulation with a precision-weighted matching term and a log-variance regularization term, yielding both robustness and interpretability. Uncertainty directly reflects the reliability of features and plays a central role in shaping the alignment.

% \vspace{0.1cm}
\noindent
\textbf{Alignment for visual tokens.} With the emergence of ViTs \cite{dosovitskiy2020image, rao2021dynamicvit}, visual data is increasingly represented as sets of tokens. Most existing approaches compare such representations using global pooling, similarity aggregation, or attention-based matching \cite{sarlin2020superglue, xiao2023token, jung2024scale,cao2024madtp}, often without explicit structural constraints.
We extend alignment from temporal sequences to \emph{tokenized visual representations}, treating tokens as structured entities amenable to alignment. Our framework enables path-based matching over ordered token sets while using uncertainty to improve correspondences. Notably, the learned uncertainty exhibits a consistent and interpretable pattern: semantically relevant regions are assigned low uncertainty and dominate the alignment, while ambiguous regions are suppressed, enabling a new perspective linking alignment, uncertainty, and semantic importance.

\section{Method: Uncertainty-Aware Alignment}
\label{sec:method}

\subsection{Problem Formulation}
\label{sec:problem_formulation}

%\vspace{0.1cm}
\noindent
\textbf{Notations.} We consider the problem of aligning ordered sets of observations. % under uncertainty. 
Let $\mX = [\vx_1, \vx_2, \dots, \vx_\tau] \in \mbr{d \times \tau}$ and 
$\mX' = [\vx'_1, \vx'_2, \dots, \vx'_{\tau'}] \in \mbr{d \times \tau'}$ 
be two collections of lengths $\tau$ and $\tau'$, 
where $\vx_i, \vx'_j \in \mbr{d}$ are feature vectors or tokens extracted from time series, skeleton joints, or visual patch embeddings. %Concatenations of vectors and matrices follow the conventions: $[\vx_i]_{i \in \idx{I}}$ for vectors, $[x_{ij}]_{(i,j) \in \idx{I}\times \idx{J}}$ for matrices.
% For notational convenience, we represent both sequences and sets as ordered collections, while the alignment constraints will determine whether ordering is enforced.

A path $\mPi$ is defined as a sequence of index pairs $(m,n)$ satisfying structural constraints, 
and $\mathcal{A}_{\tau,\tau'}$ denotes the set of all admissible alignment paths with cardinality of the Delannoy number $D(\tau-1,\tau'-1)
=
\sum_{i=0}^{\min(\tau-1,\tau'-1)}
\binom{\tau-1}{i}\binom{\tau'-1}{i}2^i$. 
Its corresponding matrix form is denoted by $\mathbf{\Pi} \in %\mathbb{R}
[0, 1]^{\tau \times \tau'}$, 
where $\Pi_{mn}$ indicates the contribution of aligning $\vx_m$ with $\vx'_n$. 
In the classical (hard) setting, $\mathbf{\Pi}$ is a sparse binary matrix encoding a valid path, 
whereas in the soft-relaxed formulation it is denser. %, 
%induced by a Gibbs distribution over alignment paths.
%
The alignment task seeks an optimal alignment plan that minimizes a pairwise distance measure. %, optionally accounting for observation uncertainty. 
Let $\mD \!\equiv\![d^2_{mn}]_{(m,n)\in\idx{\tau}\!\times\!\idx{\tau'}} \!\in\! \mbrp{\tau \times \tau'}$ be the pairwise distance matrix with entries $d^2_{mn} \!=\! \|\vx_m \!-\! \vx'_n\|^2_2$. %(or other suitable dissimilarity measures). 
A classical DTW then seeks:
\begin{align}
    \label{eq:classical_alignment}
    \mathbf{\Pi}^\ast 
    = \argmin_{\mathbf{\Pi} \in \mathcal{A}_{\tau,\tau'}} 
    \langle \mathbf{\Pi}, \mD \rangle,
\end{align}
where %$\mathcal{A}_{\tau,\tau'}$ denotes the set of admissible alignment plans (\eg, monotonic paths for time series or matching constraints for ordered tokens), and 
$\langle \cdot, \cdot \rangle$ denotes the matrix inner product, defined as 
$\langle \mPi, \mD \rangle \equiv \langle \mathrm{vec}(\mPi), \mathrm{vec}(\mD) \rangle$. 
% In practice, we will later adopt a soft-relaxed formulation that replaces hard constraints with a dense matching distribution over pairwise correspondences.

%\vspace{0.1cm}
\noindent
\textbf{Motivation for probabilistic alignment.} 
In practical applications, feature observations are often corrupted by heteroscedastic noise, missing data, or ambiguous correspondences 
(\eg, occluded joints, variable-speed actions, or visual patch ambiguities). Deterministic alignment in Eq.~\eqref{eq:classical_alignment} 
cannot capture such uncertainty, often leading to suboptimal matches.
To address this limitation, we introduce a \textit{probabilistic alignment framework}, where each pairwise correspondence $(m,n)$ 
is associated with a variance $\sigma^2_{mn} > 0$ quantifying its reliability. Collecting these variances defines an \emph{uncertainty matrix} $\cov \equiv [\sigma^2_{mn}]_{(m,n) \in \idx{\tau} \times \idx{\tau'}} \in \mathbb{R}_+^{\tau \times \tau'}$.
We further define its element-wise inverse (precision) as
$\cov^\dagger \equiv [\sigma_{mn}^{-2}]_{(m,n) \in \idx{\tau} \times \idx{\tau'}}$.

Under this formulation, correspondences with higher uncertainty contribute less to the alignment cost, reducing the impact of noise. 
Formally, the weighted structured alignment problem becomes:
\begin{align}
\label{eq:weighted_alignment}
\mathbf{\Pi}^\ast
= \!\!\argmin_{\cov^\dagger,\,\mathbf{\Pi} \in \mathcal{A}_{\tau,\tau'}} 
\!\!\Big\langle \mathbf{\Pi}, \mD \odot \cov^\dagger \Big\rangle
+ \beta \Big\langle \mathbf{\Pi}, \log \cov \Big\rangle,
\end{align}
where $\odot$ denotes element-wise multiplication, and $\beta \ge 0$ controls the strength of the uncertainty regularization.

The first term represents a \emph{precision-weighted matching cost}, down-weighting unreliable correspondences, while the second term penalizes large uncertainty estimates. % , preventing degenerate solutions. 
% When all $\sigma^2_{mn} = 1$, Eq.~\eqref{eq:weighted_alignment} recovers the classical alignment objective in Eq.~\eqref{eq:classical_alignment}.
% This probabilistic formulation generalizes classical DTW to an uncertainty-aware alignment framework that applies to both sequences and token-based representations. 
% It provides: 
% (i) robustness to noise and outliers via uncertainty-aware weighting, 
% (ii) a principled foundation for differentiable relaxation through soft-min or path probability distributions, and 
% (iii) a unified treatment of structured inputs across domains (\eg, time series, skeletons, and visual tokens).
In the following, we derive a principled \textit{uncertainty-aware alignment operator (uDTW)} from Eq.~\eqref{eq:weighted_alignment}, along with its probabilistic interpretation and properties. %, and extensions. % to token-based alignment.

\subsection{Probabilistic Alignment with Heteroscedastic Uncertainty}
\label{sec:prob_alignment}

%\vspace{0.1cm}
\noindent
\textbf{Maximum likelihood path estimation.} Building on the general alignment formulation in Eq.~\eqref{eq:weighted_alignment}, we model each correspondence $(m,n)$ as a random variable with Gaussian uncertainty:
$p(\vx_m \mid \vx'_n, \sigma^2_{mn}) = \mathcal{N}(\vx_m; \vx'_n, \sigma^2_{mn})$,
where $\sigma^2_{mn} > 0$ captures the heteroscedastic noise associated with matching $\vx_m$ to $\vx'_n$.  
Consider an alignment path $\mPi_i$. % \subseteq \idx{\tau} \times \idx{\tau'}$. 
Under the assumption of independence, the likelihood of the path is
\begin{align}
\mathcal{L}(\mPi_i) &\!=\!\! \prod_{(m,n)\in \mPi_i} p(\vx_m \mid \vx'_n, \sigma^2_{mn}) \nonumber \\
&\!=\!\! \prod_{(m,n)\in \mPi_i} \frac{1}{(2\pi \sigma^2_{mn})^{d/2}} \exp\Big(-\frac{\|\vx_m - \vx'_n\|_2^2}{2 \sigma^2_{mn}}\Big).
\label{eq:mle}
\end{align}

Maximizing the above likelihood is equivalent to minimizing log-likelihood:
\begin{align}
\label{eq:weighted_path}
\!\!\!\!\cov_i^\ast &= \min_{\{\sigma_{mn}\}_{(m,n)\in\mPi_i}} 
\sum_{(m,n)\in \mPi_i} \frac{\|\vx_m - \vx'_n\|_2^2}{\sigma^2_{mn}} + d \log \sigma^2_{mn}.
\end{align}
where the first term penalizes large distances scaled by precision, and the second term regularizes against overestimating uncertainty. $\cov_i^\ast$ contains variances for path $\mPi_i$. Eq.~\eqref{eq:weighted_path} generalizes classical DTW by incorporating heteroscedastic uncertainty. If all $\sigma^2_{mn} = 1$, it reduces to standard DTW. When $\sigma^2_{mn}>0$ vary across correspondences, the alignment becomes uncertainty-aware.

%\vspace{0.1cm}
\noindent
\textbf{Relation to uDTW.} To connect the matrix-based formulation with individual alignment path $\mPi_i$ with its path-specific cost directly derived from the objective in Eq.~\eqref{eq:weighted_alignment}:
\begin{align}
\label{eq:uDTW_terms}
\left\{
\begin{array}{l}
d^2_{\mPi_i} \equiv \Big\langle \mPi_i, \mD \odot \cov^\dagger \Big\rangle \\
\Omega_{\mPi_i} \equiv \Big\langle \mPi_i, \log \cov \Big\rangle.
\end{array}
\right.
\end{align}
Here, $d^2_{\mPi_i}$ represents the precision-weighted matching cost along the path, and $\Omega_{\mPi_i}$ is the regularization term penalizing large uncertainty estimates on the path.  
The  alignment path $\mPi_i$ under this probabilistic framework has variance $\cov_i^\ast$:
\begin{align}
\label{eq:uDTW_path}
\cov_i^\ast &= \min_{\cov} 
d^2_{\mPi_i} + \beta \Omega_{\mPi_i}, \quad \beta \ge 0,
\end{align}
which defines the uDTW objective for $\mPi_i$.  
This formulation explicitly shows %that minimizing the uncertainty-weighted alignment in Eq.~\eqref{eq:weighted_alignment} is equivalent to maximizing the likelihood of a path in path-space (
connection with Eq.~\eqref{eq:mle}. % Consequently, uDTW provides a principled uncertainty-aware generalization of classical DTW (Eq. \eqref{eq:weighted_path}), naturally bridging matrix-level formulations with path-level optimization. %, and supporting differentiable, end-to-end learning for structured sequences and token-based representations.

%\vspace{0.1cm}
\noindent
\textbf{Parameterizing uncertainty.}  
Directly optimizing the full uncertainty matrix $\cov$ is impractical for variable-length sequences. % ($\tau \!\!\neq\!\! \tau'$). 
Instead, we predict uncertainties using a small neural network $\sigma(\cdot; \mathcal{P}_\sigma)$, called \emph{SigmaNet}. 
In practice, each pairwise uncertainty $\sigma_{mn}^2$ can be computed by either (i)  additive per-token variances $\sigma_{mn}^2 \!\!=\!\! 0.5[\sigma^2(\vx_m) \!\!+\!\! \sigma^2(\vx'_n)]$ or (ii) a jointly predicted pairwise variance $\sigma_{mn}^2 \!\!=\!\! \sigma^2(\vx_m, \vx'_n)$. These predicted uncertainties are then used to weight the alignment cost in Eq. \eqref{eq:weighted_path} %\eqref{eq:uDTW_path})
for end-to-end learning of both feature encoder and uncertainty predictor. 

% \textbf{Parameterizing uncertainty.}  
% Direct optimization over $\cov$ is impractical for variable-length sequences ($\tau \neq \tau'$). We instead predict uncertainties with a small neural network $\sigma(\cdot;\mathcal{P}_\sigma)$, called \emph{SigmaNet}:
% \begin{align}
% \label{eq:additive_cov_revised}
% \cov &= 0.5  \big[ \sigma^2(\vx_m;\mathcal{P}_\sigma) + \sigma^2(\vx'_n;\mathcal{P}_\sigma) \big]_{(m,n) \in \idx{\tau} \times \idx{\tau'}}, \\
% \label{eq:joint_cov_revised}
% \cov' &= [\sigma^2(\vx_m, \vx'_n;\mathcal{P}_\sigma)]_{(m,n) \in \idx{\tau} \times \idx{\tau'}}.
% \end{align}
% Eq.~\eqref{eq:additive_cov_revised} uses additive per-token variances, while Eq.~\eqref{eq:joint_cov_revised} allows modeling the pairwise uncertainty jointly. These predictions enable end-to-end learning of both the feature encoder and uncertainty network.

%\vspace{0.1cm}
\noindent
\textbf{Soft relaxation.}  
To facilitate differentiable training, soft uDTW  aggregates over all admissible alignment paths: % Let $\mD$ be the pairwise distance matrix, and $\cov$ the corresponding uncertainty matrix. 
%We define:
\begin{align}
&\!\!\!\!\left\{\!\!
\begin{array}{l}
{d^2_{\text{uDTW}}% (\mD,\cov^{\dag})
}
\!=\!\softming\!\Big(\underbrace{\left[\left\langle\mPi,\mD\!\odot\!\cov^{\dag}\right\rangle\right]_{\mPi\in\tAnb_{\tau,\tau'}}}_{\vw}\Big),\!
\\ 
\Omega% ({\cov})
\!=\!\softminsel\!\left(\vw, 
\left[\left\langle\mPi,\log\!\cov\right\rangle\right]_{\mPi\in\tAnb_{\tau,\tau'}}
\right),
\end{array}
\right.
\label{eq:soft_uDTW_compact}
\end{align}
% where
\begin{align}
&\!\!\!\!\text{where} \left\{\!\!
\begin{array}{l}
\label{eq:selectors}
    \!\softming(\mathbf{w}) \!=\! \sum_i w_i \frac{\exp(-(w_i - \mu)/\gamma)}{\sum_j \exp(-(w_j - \mu)/\gamma)}, \\
    \!\softminsel(\mathbf{w}, \vu) \!=\! \sum_i u_i \frac{\exp(-(w_i - \mu)/\gamma)}{\sum_j \exp(-(w_j - \mu)/\gamma)},
\end{array}
\right.
\end{align}
with $\mu$ the mean of $\mathbf{w}$ for numerical stability and $\gamma>0$ controlling the smoothness. The operator $\softminsel$ acts as a soft selector, returning the value associated with the minimal path when $\gamma \!\to\! 0$. Both $d^2_{\text{uDTW}}(\mD,\cov^\dag)
$ and $\Omega(\cov)$ depend on the input sequences, making them fully differentiable \wrt the token embeddings. The vector $\mathbf{w}$ contains the path-aggregated precision-weighted distances for all admissible paths, and $\Omega% (\cov)
$ is the aggregation of log-variance penalties along these paths.

This formulation decouples the precision-weighted path distance $d^2_{\text{uDTW}}$ from the uncertainty regularization $\Omega% (\cov)
$, yielding a fully differentiable, uncertainty-aware alignment. During training, $\Omega% (\cov)
$ penalizes the model for extreme uncertainty, while $d^2_{\text{uDTW}}$ captures the smallest matching distance between an two sets aligned. When $\gamma \to 0$, $\Omega% (\cov)
$ reduces to the log-variance along the minimal-cost path. %, recovering the classical uDTW behavior (Eq. \eqref{eq:uDTW_terms}).

% This formulation naturally extends uDTW to:  
% (i) weighted structured matching under heteroscedastic uncertainty,  
% (ii) differentiable alignment for end-to-end learning in deep networks, and (iii) domain-agnostic applicability, including time series, structured sequences, and token-based visual features.

% We next extend this to set-to-set alignment, enabling uDTW to operate over visual token embeddings (\eg, ViT patches) while retaining all probabilistic guarantees.

\subsection{uDTW Properties and Theoretical Insights}
\label{sec:udtw_theory}

The probabilistic alignment formulation of uDTW % (Sec.~\ref{sec:prob_alignment}) 
is not only practical but also admits several desirable properties. %, which distinguish it from classical DTW, soft-DTW (sDTW), and other alignment-based methods. 
Fig. \ref{fig:visual} shows how uncertainty affects uDTW. % We summarize the key properties below.

% Formally, consider a perturbed sequence $\vx'_n + \epsilon_n$ with zero-mean noise $\epsilon_n \sim \mathcal{N}(0, \Sigma_n)$. The expected path cost along $\mPi_i$ becomes:
% \begin{align}
%     \mathbb{E}[d^2_{\mPi_i}] &= \sum_{(m,n)\in\mPi_i} \frac{\|\vx_m - (\vx'_n + \epsilon_n)\|_2^2}{\sigma^2_{mn}} \nonumber \\ 
%     & = \sum_{(m,n)\in\mPi_i} \frac{\|\vx_m - \vx'_n\|_2^2 + \text{tr}(\Sigma_n)}{\sigma^2_{mn}},
% \end{align}
% which shows that the additional noise contributes additively but is scaled by $\sigma^2_{mn}$. Therefore, higher uncertainty estimates naturally reduce the impact of noisy frames, providing robustness that classical DTW lacks.

%\vspace{0.1cm}
\noindent
\textbf{Robustness to noise.}  
By modeling heteroscedastic uncertainty $\sigma^2_{mn}$, uDTW down-weights contributions from noisy or unreliable tokens.

\begin{proposition}[Alignment under additive noise]
Let $[\vx_1+\boldsymbol{\epsilon}_1, \dots, \vx_\tau+\boldsymbol{\epsilon}_\tau]$ and $[\vx'_1+\boldsymbol{\epsilon}'_1, \dots, \vx'_{\tau'}+\boldsymbol{\epsilon}'_{\tau'}]$ be sequences with corresponding additive i.i.d. Gaussian noises $\boldsymbol{\epsilon}_m\!\sim\!\mathcal{N}(0, \gamma^2_m)$ and $\boldsymbol{\epsilon}'_n\!\sim\!\mathcal{N}(0, \gamma'^2_n)$. Then, for a given alignment path $\mPi_i$:
% \begin{align}
$\mathbb{E}_{\epsilon,\epsilon'}[d^2_{\mPi_i}] = \sum_{(m,n)\in\mPi_i} 
%\frac{\|\vx_m - \vx'_n\|_2^2 + \mathrm{tr}(\Sigma_n)}{\sigma^2_{mn}}
\frac{\|\vx_m - \vx'_n\|_2^2}{\sigma^2_{mn}}
+
d\sum_{(m,n)\in\mPi_i}
\frac{\gamma_m^2+{\gamma'_n}^2}{\sigma^2_{mn}}
$.
% \end{align}

\noindent
The uncertainty-weighted distance can reduce noises $\gamma_m^2+{\gamma'_n}^2$ and the noisy samples in the numerator by dividing them by $\sigma^2_{mn}$, e.g., additively factorized $\sigma^2_{mn}\!=\!0.5\big[\sigma^2(\vx_m)+\sigma^2(\vx'_n)\big]$.
\end{proposition}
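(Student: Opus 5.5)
The plan is to expand the path cost term by term and use linearity of expectation, with the variances $\sigma^2_{mn}$ treated as fixed, noise-independent quantities. The expectation must be conditional on $\cov$, i.e., computed before any dependence of SigmaNet on the noisy inputs is taken into account; otherwise the ratio does not factor. By Eq.~\eqref{eq:uDTW_terms}, for a hard path $\mPi_i$ we have $d^2_{\mPi_i}=\sum_{(m,n)\in\mPi_i}\|\tilde{\vx}_m-\tilde{\vx}'_n\|_2^2/\sigma^2_{mn}$, where $\tilde{\vx}_m=\vx_m+\boldsymbol{\epsilon}_m$ and $\tilde{\vx}'_n=\vx'_n+\boldsymbol{\epsilon}'_n$. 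Since $\mPi_i$ is a finite set of index pairs, the expectation passes inside the sum, and each summand can be handled separately.

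For a fixed pair $(m,n)$, write $\tilde{\vx}_m-\tilde{\vx}'_n=(\vx_m-\vx'_n)+(\boldsymbol{\epsilon}_m-\boldsymbol{\epsilon}'_n)$ and expand the squared norm:
\begin{align*}
\|\vx_m-\vx'_n\|_2^2+2\langle \vx_m-\vx'_n,\boldsymbol{\epsilon}_m-\boldsymbol{\epsilon}'_n\rangle+\|\boldsymbol{\epsilon}_m-\boldsymbol{\epsilon}'_n\|_2^2 .
\end{align*}
The first term is deterministic. The cross term has zero mean because both noises are zero-mean. For the last term, we use independence of $\boldsymbol{\epsilon}_m$ and $\boldsymbol{\epsilon}'_n$ (they belong to different sequences) to get
\begin{align*}
\mathbb{E}\|\boldsymbol{\epsilon}_m\|_2^2+\mathbb{E}\|\boldsymbol{\epsilon}'_n\|_2^2-2\,\mathbb{E}\langle\boldsymbol{\epsilon}_m,\boldsymbol{\epsilon}'_n\rangle .
\end{align*}
The inner-product expectation vanishes. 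Reading $\mathcal{N}(0,\gamma_m^2)$ as isotropic in $\mbr{d}$, with covariance $\gamma_m^2\boldsymbol{I}$, gives $\mathbb{E}\|\boldsymbol{\epsilon}_m\|_2^2=d\gamma_m^2$, and likewise $\mathbb{E}\|\boldsymbol{\epsilon}'_n\|_2^2=d\gamma_n'^2$. Dividing by $\sigma^2_{mn}$ and summing over $\mPi_i$ yields exactly the stated formula.

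The main subtlety is justifying that $\sigma^2_{mn}$ can be pulled out of the expectation. This holds if $\cov$ is regarded as given, or as predicted from clean features or from quantities independent of the noise. I would state this assumption explicitly, since with SigmaNet applied to noisy inputs the identity would hold only conditionally or approximately. The isotropic-covariance reading of $\mathcal{N}(0,\gamma_m^2)$ must also be fixed, since it is what produces the factor $d$.

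The closing remark then follows by inspection. Each noise contribution $d(\gamma_m^2+\gamma_n'^2)$, and each noisy distance in the numerator, is scaled by $1/\sigma^2_{mn}$. Hence if SigmaNet assigns large $\sigma^2(\vx_m)$ or $\sigma^2(\vx'_n)$ to noisy tokens, the additive choice $\sigma^2_{mn}=0.5[\sigma^2(\vx_m)+\sigma^2(\vx'_n)]$ is large whenever either endpoint is noisy, and that pair's contribution is attenuated. The $\beta\,\Omega_{\mPi_i}$ term of Eq.~\eqref{eq:uDTW_path} prevents the trivial solution $\sigma^2_{mn}\to\infty$.
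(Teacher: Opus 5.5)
Your proof is correct and follows the paper's route: the paper likewise writes the noisy difference as $\mathbf{y}_{mn}+\boldsymbol{\epsilon}^\ast_{mn}$ with $\boldsymbol{\epsilon}^\ast_{mn}=\boldsymbol{\epsilon}_m-\boldsymbol{\epsilon}'_n$. It then uses $\mathbb{E}\|\mathbf{y}_{mn}+\boldsymbol{\epsilon}^\ast_{mn}\|_2^2=\|\mathbf{y}_{mn}\|_2^2+\mathbb{E}\|\boldsymbol{\epsilon}^\ast_{mn}\|_2^2$ and sums the $\sigma^{-2}_{mn}$-weighted terms over the path. The assumptions you state explicitly are left implicit in the paper but are exactly what its argument requires: that $\sigma^2_{mn}$ does not depend on the noise, that $\boldsymbol{\epsilon}_m$ and $\boldsymbol{\epsilon}'_n$ are independent, and that the noise covariance is isotropic, which produces the factor $d$.
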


\begin{proof}% [Proof sketch]
Expanding the squared norm and taking expectation:
$\mathbb{E}_{\epsilon_m,\epsilon'_n}\big[\|\vx_m\!+\!\boldsymbol{\epsilon}_m \!-\! (\vx'_n\!+\!\boldsymbol{\epsilon}'_n)\|_2^2\big]
\!=\!  \mathbb{E}_{\epsilon^\ast_{mn}}\big[\|\mathbf{y}_{mn}+\boldsymbol{\epsilon}^\ast_{mn}\|^2_2\big]\!=\!\|\mathbf{y}_{mn}\|^2_2+\mathbb{E}_{\epsilon^\ast_{mn}}\big[\|\boldsymbol{\epsilon}^\ast_{mn}\|^2_2\big]$, 
where $\mathbf{y}_{mn}\!=\!\vx_m\!-\!\vx'_n,\,\boldsymbol{\epsilon}^\ast_{mn}\!=\!\boldsymbol{\epsilon}_m\!-\!\boldsymbol{\epsilon}'_n$ and $\boldsymbol{\epsilon}^\ast_{mn}\!\sim\!\mathcal{N}(0, \gamma^2_m+\gamma'^2_n)$.
%
%
% \begin{align}
% & \mathbb{E}\big[\|\vx_m - (\vx'_n + \epsilon_n)\|_2^2\big] \nonumber\\ 
% = & \mathbb{E}\big[\|\vx_m - \vx'_n\|_2^2 - 2 (\vx_m - \vx'_n)^\top \epsilon_n + \|\epsilon_n\|_2^2\big] \nonumber\\
% = & \|\vx_m - \vx'_n\|_2^2 - 2 (\vx_m - \vx'_n)^\top \mathbb{E}[\epsilon_n] + \mathbb{E}[\|\epsilon_n\|_2^2] \nonumber\\
% = & \|\vx_m - \vx'_n\|_2^2 + \text{tr}(\Sigma_n), \nonumber
% \end{align}
%where we used $\mathbb{E}[\epsilon_n] = 0$ and $\mathbb{E}[\|\epsilon_n\|_2^2] = \mathrm{tr}(\Sigma_n)$.  
Summing over $(m,n) \in \mPi_i$ and applying the uncertainty weighting $\sigma^2_{mn}$ gives the expected uDTW cost.
\end{proof}

\begin{remark}
uDTW generalizes sDTW by explicitly accounting for the observation noise, providing a principled mechanism for robust sequence alignment.
\end{remark}

%{\color{red}here}
%\vspace{0.1cm}
\noindent
\textbf{Relation to attention \& path-constrained transport.}  %$\mathcal{A}_{\tau,\tau'}$ 
Let $\mathcal{A}_{\tau,\tau'}\!\!=\!\!\bigl\{
(m_1,n_1),\dots,(m_L,n_L)\!\!:\!\!
(m_1,n_1)\!\!=\!\!(1,1),
(m_L,n_L)\!\!=\!\!(\tau,\tau'),\,
(m_{\ell+1}\!-\!m_\ell,\; n_{\ell+1}\!-\!n_\ell)\in\big\{(1,0),(0,1),(1,1)\big\},\,L\!\geq\!1
\bigr\}$ be the set of all admissible alignment paths of lengths $L$. 
Each path $\mPi_i \in \mathcal{A}_{\tau,\tau'}$ can also be represented as a binary matrix $\mPi_i \in \{0,1\}^{\tau \times \tau'}$ satisfying monotonicity constraints $(m_{\ell+1}\!-\!m_\ell, n_{\ell+1}\!-\!n_\ell)\!\in\!\big\{(1,0),(0,1),(1,1)\big\}$ for $l\!=\!1,\ldots,L$.

For each path $\mPi_i$, define the precision-weighted path cost $w_i \equiv \langle \mPi_i, \mD \odot \cov^\dag \rangle$.
The soft uDTW distance is given by $d^2_{\text{uDTW}} = \softming(\mathbf{w}) = \sum_i w_i \cdot \pi(\mPi_i)$,
where $\pi(\mPi_i) = \frac{\exp\big(-(w_i - \mu)/\gamma\big)}{\sum_{j:\mPi_j \in \mathcal{A}_{\tau,\tau'}} \exp\big(-(w_j - \mu)/\gamma\big)}$ defines a Probability Mass Function (PMF) over all admissible alignment paths, \ie, $\sum_{i=1}^{D(\tau-1,\tau'-1)} \pi(\mPi_i)=1$. 

The same weighting $\pi(\mPi)$ is used in $\softminsel$ in Eq. \eqref{eq:selectors}, ensuring that both the alignment cost and the uncertainty regularization are aggregated under the PMF of all admissible paths.
This weighting induces a \textit{pairwise soft-coupling matrix} $\mPi^{\star}\!=\! [\pi^\star_{mn}]_{(m,n)\in\idx{\tau}\times\idx{\tau'}}$, defined as the  probability that elements $\vx_m$ and $\vx'_n$ are aligned:
$\pi^\star_{mn}\!=\!\sum_{\mPi \in \mathcal{A}_{\tau,\tau'} : [\mPi_i]_{mn}=1} \pi(\mPi)$.

\begin{theorem}[Equivalence of path aggregation and pairwise coupling aggregation]
Under the above definitions, the soft uDTW distance admits the equivalent representation
$d^2_{\text{uDTW}}(\mathbf{X},\mathbf{X}') = \sum_{m=1}^{\tau}\sum_{n=1}^{\tau'} \pi^\star_{mn} \frac{\|\vx_m - \vx'_n\|_2^2}{\sigma^2_{mn}}$.  
\end{theorem}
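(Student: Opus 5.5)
The identity is a change in the order of summation: instead of summing over whole paths, we sum over the individual cells $(m,n)$ that those paths visit. We start from the definition $d^2_{\text{uDTW}}=\softming(\mathbf{w})=\sum_i w_i\,\pi(\mPi_i)$, where $i$ ranges over the finitely many paths in $\mathcal{A}_{\tau,\tau'}$ (there are $D(\tau-1,\tau'-1)$ of them). We then replace each $w_i$ by its entrywise expansion. Since $\mPi_i$ is a binary matrix,
\begin{align*}
w_i=\langle \mPi_i,\mD\odot\cov^\dagger\rangle=\sum_{m=1}^{\tau}\sum_{n=1}^{\tau'}[\mPi_i]_{mn}\,\frac{\|\vx_m-\vx'_n\|_2^2}{\sigma^2_{mn}}.
\end{align*}
This uses only $d^2_{mn}=\|\vx_m-\vx'_n\|_2^2$ and $[\cov^\dagger]_{mn}=\sigma^{-2}_{mn}$ from the problem formulation.

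Substituting gives a triple sum over $i$, $m$ and $n$. Every index set is finite, so we may swap the sums and move the path sum inside:
\begin{align*}
d^2_{\text{uDTW}}=\sum_{m=1}^{\tau}\sum_{n=1}^{\tau'}\frac{\|\vx_m-\vx'_n\|_2^2}{\sigma^2_{mn}}\Big(\sum_{i}[\mPi_i]_{mn}\,\pi(\mPi_i)\Big).
\end{align*}
The factors $\|\vx_m-\vx'_n\|_2^2/\sigma^2_{mn}$ do not depend on $i$, which is what lets them come out of the path sum.

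Next, because $[\mPi_i]_{mn}\in\{0,1\}$, the inner sum keeps exactly the paths that pass through the cell $(m,n)$. It therefore equals $\sum_{\mPi:[\mPi]_{mn}=1}\pi(\mPi)=\pi^\star_{mn}$ by the definition of the soft-coupling matrix, and this gives the claimed formula.

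The step that needs care is a subtlety in the definition of $\pi(\mPi_i)$: the weights depend on $\mathbf{w}$ (and on the shift $\mu$), so the identity is not linear in $\mathbf{w}$ as a function. The proof does not need such linearity. We treat $\pi(\cdot)$ as a fixed probability mass function evaluated at the current $\mathbf{w}$, and use only linearity of the outer weighted sum in the $w_i$. We should also remark that $\mu$ cancels between numerator and denominator, so $\pi$ is well defined. A second point to check is that a monotone path visits each cell at most once. The steps $(1,0)$, $(0,1)$ and $(1,1)$ strictly increase $m+n$, so no cell repeats and the $0/1$ matrix representation is faithful. Without this, multiplicities would appear in the inner sum.
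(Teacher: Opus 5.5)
Your proof is correct and follows the same route as the paper: expand each $w_i$ entrywise, exchange the finite sums over paths and cells, and recognize the inner path sum as $\pi^\star_{mn}$. Your extra checks fill in details the paper leaves implicit: the shift $\mu$ cancels, and because every step strictly increases $m+n$, no cell is visited twice, so the binary-matrix representation of a path is faithful.
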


\begin{proof}
    Substituting $w_i\!=\!\sum_{m,n} [\mPi^\star_i]_{mn} \frac{\|\vx_m - \vx'_n\|_2^2}{\sigma^2_{mn}}$ into $%d^2_{\text{uDTW}}= % = \softming(\mathbf{w}) 
    \sum_i w_i \cdot \pi(\mPi_i)$ and exchanging the order of summation yields
$d^2_{\text{uDTW}} 
= \sum_{m,n} \Big(\underbrace{\sum\nolimits_i \pi(\mPi_i)[\mPi^\star_i]_{mn}}_{=\pi^\star_{mn}} \Big) \frac{\|\vx_m - \vx'_n\|_2^2}{\sigma^2_{mn}}$ where 
the inner summation equals $\pi^\star_{mn}$ by definition, completing the proof.
\end{proof}

The coupling $\pi^*_{mn}$ can be interpreted as a \textit{structured attention map}. Unlike conventional attention, which normalizes local similarity scores, each $\pi^*_{mn}$ aggregates contributions over all admissible alignment paths,  reserving matching order
constraint. Furthermore, the precision weighting $\cov^\dag$ modulates contributions across all paths, ensuring that uncertain correspondences are systematically down-weighted.

The same coupling induces a transport-like objective in which $\pi^*_{mn}$ acts as a soft assignment between the two sequences. In contrast to classical optimal transport, the coupling here is restricted to those induced by distributions over admissible alignment paths, thereby encoding structural constraints such as monotonicity and continuity.

% {\color{red}here}
%\vspace{0.1cm}
\noindent\textbf{Connection to probabilistic alignment.} The soft uDTW formulation admits a probabilistic interpretation in which alignment is modeled as a latent variable distributed over the space of admissible paths.
The soft uDTW distance corresponds to the expected alignment cost 
$d^2_{\text{uDTW}}$. % = \mathbb{E}_{\pi}[v_i]$.
Unlike deterministic alignment methods that select a single optimal path, our formulation maintains a distribution over all feasible paths, allowing the model to represent ambiguity in the alignment.
A key property of this formulation is that the same path distribution $\pi(\mPi_i)$ is used consistently across both the alignment objective and the uncertainty regularization. In particular, the uncertainty term
$\Omega = \softminsel(\mathbf{w}, \mathbf{u}) = \sum_i u_i \cdot \pi(\mPi_i)$,
aggregates path-wise log-variance quantities $u_i = \langle \mPi_i, \log \cov \rangle$ under the same distribution. This shared weighting ensures that the model jointly evaluates alignment quality and uncertainty under a common probabilistic structure, enabling coherent propagation of uncertainty through the alignment process.
The temperature parameter $\gamma > 0$ controls the concentration of the path distribution. This behavior is formalized below.

\begin{theorem}[Deterministic limit of uDTW]
    Let $\pi(\mPi_i)$ be defined as above, and let $\mPi^* = \arg\min_{\mPi_i \!\in\! \mathcal{A}_{\tau,\tau'}} w_i$ denote an optimal alignment path. Then, as $\gamma \to 0$, the distribution $\pi(\mPi_i)$ converges to a Dirac distribution concentrated on $\mPi^*$, and $d^2_{\text{uDTW}} \!\to\! w^*$, $\Omega \!\to\! u^*$, where $w^* \!=\! w(\mPi^*)$ and $u^* \!=\! u(\mPi^*)$.
\end{theorem}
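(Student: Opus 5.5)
The plan is a direct analysis of the softmax weights over the finite set $\mathcal{A}_{\tau,\tau'}$, whose cardinality is $N=D(\tau-1,\tau'-1)<\infty$. Throughout, $\vw$ and $\vu$ are treated as fixed vectors independent of $\gamma$. This is the natural reading, because $\cov$ comes from SigmaNet and does not depend on $\gamma$.

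\textbf{Step 1: remove $\mu$.} The shift by $\mu$ multiplies numerator and denominator by the same factor $\exp(\mu/\gamma)$, so it cancels and
$\pi(\mPi_i)=\exp(-w_i/\gamma)/\sum_j \exp(-w_j/\gamma)$.

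\textbf{Step 2: split off the minimizers.} Let $w^*=\min_j w_j$ and $S=\{i: w_i=w^*\}$. Multiplying numerator and denominator by $\exp(w^*/\gamma)$ gives
\begin{align}
\pi(\mPi_i)=\frac{\exp(-(w_i-w^*)/\gamma)}{|S|+\sum_{j\notin S}\exp(-(w_j-w^*)/\gamma)}.
\end{align}
Define the gap $\delta=\min_{j\notin S}(w_j-w^*)>0$; it is positive because there are finitely many paths. Each term $\exp(-(w_j-w^*)/\gamma)$ with $j\notin S$ is then at most $e^{-\delta/\gamma}\to 0$. Hence $\pi(\mPi_i)\to 1/|S|$ for $i\in S$ and $\pi(\mPi_i)\to 0$ otherwise. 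The explicit bound
\begin{align}
\pi(\mPi_i)\le \exp(-\delta/\gamma), \qquad i\notin S,
\end{align}
even gives an exponential rate of convergence.

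\textbf{Step 3: limits of $d^2_{\text{uDTW}}$ and $\Omega$.} Both are finite linear combinations of the $\pi(\mPi_i)$ with $\gamma$-independent coefficients, so the limits pass inside the sums:
\begin{align}
d^2_{\text{uDTW}}\to \frac{1}{|S|}\sum_{i\in S} w_i = w^*, \qquad
\Omega\to \frac{1}{|S|}\sum_{i\in S} u_i.
\end{align}

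\textbf{Step 4: handle ties.} This is the main obstacle, since the theorem speaks of a Dirac distribution on a single $\mPi^*$. If the minimizer is unique, $|S|=1$, and Step 2 gives exactly the Dirac limit together with $\Omega\to u^*$. If $|S|>1$, the limit of $\pi$ is instead uniform on $S$. The value $w^*$ is still obtained in that case, but $\Omega$ tends to the average of $u_i$ over $S$. This average equals $u(\mPi^*)$ only if the log-variance costs agree on all tied paths.

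I would therefore state the uniqueness of $\arg\min_i w_i$ as an assumption, which holds generically, e.g.\ for almost every $\mD$ given continuous features. I would also add a remark describing the tied case, where the limit is uniform over the optimal set. The rest of the proof is routine.
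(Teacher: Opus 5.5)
Your proposal is correct and follows essentially the same route as the paper. The paper shows $\pi(\mPi_i)/\pi(\mPi^*) = \exp(-(w_i-w^*)/\gamma)\to 0$ and then passes the limit through the finite expectations that define $d^2_{\text{uDTW}}$ and $\Omega$, which is what your normalization by $\exp(w^*/\gamma)$ does. Your handling of ties is a genuine refinement: the paper's proof silently assumes a unique minimizer when it asserts $w_i > w^*$ for every $\mPi_i \neq \mPi^*$, so you are right to state uniqueness explicitly and to note that otherwise the limit is uniform on $S$ and $\Omega$ tends to the average of $u_i$ over $S$.
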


\begin{proof}
For any $\mPi_i \neq \mPi^*$, we have $w_i > w^*$. Hence $\frac{\pi(\mPi_i)}{\pi(\mPi^*)}
= \exp\!\left(-\frac{w_i - w^*}{\gamma}\right) \to 0$ as $\gamma \!\to\! 0$.
Thus $\pi(\mPi^*) \to 1$ and $\pi(\mPi_i) \to 0$ for all $\mPi_i \neq \mPi^*$. The convergence of $d^2_{\text{uDTW}}$ and $\Omega$ follows directly from their definitions as expectations under $\pi$.    
\end{proof}

This result shows that uDTW provides a continuous relaxation of classical alignment, interpolating between deterministic path selection and distributed alignment over multiple paths. For finite $\gamma$, the model captures uncertainty in the alignment itself, while for $\gamma \to 0$, it recovers the classical uDTW formulation. Crucially, the incorporation of heteroscedastic precision $\cov^\dag$ ensures that this uncertainty is data-dependent, allowing unreliable correspondences to be systematically attenuated across all plausible alignments.

This probabilistic alignment view provides a formal latent-variable interpretation of the path distribution underlying the structured coupling presented previously.

\begin{figure*}[tbp]
\vspace{-0.2cm}
  \centering
  \includegraphics[width=\linewidth]{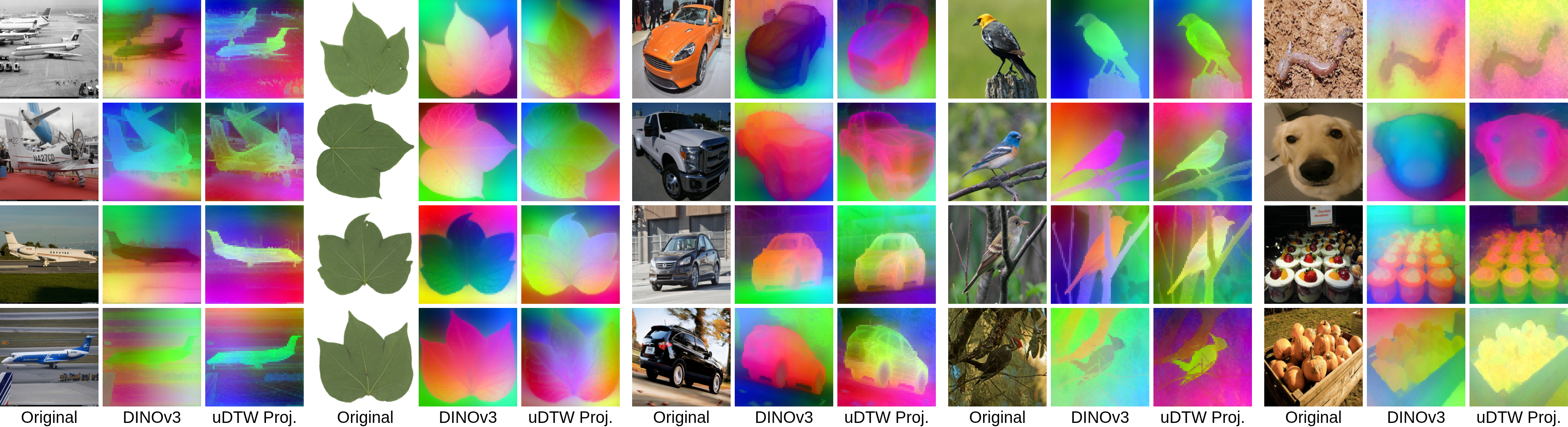}
  \vspace{-0.6cm}
  \caption{
Visualization of visual-token embeddings before and after the learned uDTW projection. Token embeddings are visualized by mapping the first three PCA components into RGB channels. Compared to the original DINOv3 embeddings, the uDTW-projected embeddings become more compact and object-centric, with clearer foreground-background separation and stronger structural continuity. On ultra-fine-grained leaf images, the projected embeddings preserve subtle structures such as contours, defects, and vein patterns, suggesting that uDTW induces a reliability-aware semantic token geometry.
}
  \label{fig:dinov3-udtw}
  %\vspace{-0.3cm}
\end{figure*}

\subsection{Extension to Tokenized Visual Representations}
\label{sec:visual_tokens}

Although uDTW was originally formulated for temporal sequences, its probabilistic, uncertainty-aware framework extends naturally to tokenized visual representations, such as patch embeddings extracted from ViTs. Each token represents a local image patch (\eg, $16 \!\times\! 16$ pixels) and may include positional encoding. Visual tokens form a structured 2D grid, but weak ordering does not impede uDTW’s structured alignment. % In this setting, heteroscedastic uncertainty provides both robustness and interpretability: tokens corresponding to semantically meaningful regions tend to have low uncertainty and dominate the alignment, whereas ambiguous or background tokens exhibit higher uncertainty and are systematically down-weighted. 
Unlike conventional attention mechanisms, % which normalize similarity scores across tokens, 
uDTW incorporates uncertainty directly into the alignment cost, producing a reliability-aware mechanism that emphasizes informative tokens without requiring an explicit attention module.

%\vspace{0.1cm}
\noindent
\textbf{Uncertainty-driven reverse attention.} Let $[\vx_m]_{m \in \idx{T}}$ and $[\vx'_n]_{n \in \idx{T}}$ denote the query and support token sets, arranged in a $\sqrt{T} \times \sqrt{T}$ token grid. 
% For each token pair $(m,n)$, define the joint heteroscedastic uncertainty
% $\mathbf{\Sigma}_{mn} = \frac{1}{2} \Big[ \sigma^2(\vx_m;\mathcal{P}_\sigma) + \sigma^2(\vx'_n;\mathcal{P}_\sigma) \Big]$,
% where $\sigma^2(\cdot;\mathcal{P}_\sigma)$ is predicted by SigmaNet. 
% This 
The element-wise uncertainty modulates the contribution of each token pair in the alignment, reflecting its reliability.
Using the soft uDTW formulation, the  alignment probability of tokens $m$ and $n$ is given by
$\pi^*_{mn}$ %$ = \sum_{\mPi_i \ni (m,n)} \pi(\mPi_i)$, where $\pi(\mPi_i)$ is the normalized Gibbs distribution over all admissible alignment paths: $\pi(\mPi_i) = \frac{\exp\big(-(d^2_{\mPi_i} - \mu_v)/\gamma\big)}{\sum_{\mPi_j \in \mathcal{A}_{\tau,\tau'}} \exp\big(-(d^2_{\mPi_j} - \mu_v)/\gamma\big)}$.
% Here, $d^2_{\mPi_i} = \langle \mPi_i, \mD \odot \cov^\dag \rangle$ is the precision-weighted path cost, $\cov^\dag = [\sigma_{mn}^{-2}]$ is the element-wise precision matrix, and $\gamma>0$ controls the smoothness of the softmax.  
The resulting matrix $[\pi^*_{mn}]_{(m,n)\in\idx{T}\times\idx{T}}$ defines a structured, uncertainty-aware alignment over token pairs, where low-uncertainty correspondences are automatically assigned higher alignment probability, producing a reliability-weighted, attention-like mapping across the token grid.

The critical insight is that the uncertainty directly modulates the effective attention over token pairs. Low-uncertainty pairs have higher precision $\sigma^{-2}_{mn}$, resulting in larger contributions to $d^2_{\mPi_i}$ and, consequently, larger  probabilities $\pi_{mn}$. Conversely, high-uncertainty pairs contribute less to the alignment and receive lower $\pi_{mn}$. In other words, the learned uncertainty acts as a \textit{reliability-aware gating signal}, producing a \textit{reverse-attention effect}: tokens with low uncertainty dominate the alignment, whereas ambiguous or unreliable tokens are systematically suppressed. Unlike standard attentions, this effect emerges directly from the probabilistic alignment formulation, without any additional attention module.
% Per-token uncertainty maps for the query and support images are obtained by marginalizing $\mathbf{\Sigma}$ over token alignments:
% $\vu_m = \mathbf{\Sigma}_{m\cdot}$, $\vu'_n = \mathbf{\Sigma}_{\cdot n}$, $m,n \in \idx{T}$.
% Reshaping these vectors into the $\sqrt{T} \times \sqrt{T}$ token grid produces patch-wise uncertainty maps $\mU$ and $\mU'$ (normalized to $[0,1]$). Low values indicate reliable, semantically meaningful tokens, while high values indicate ambiguous or noisy tokens. 
% Empirically, these maps show strong correspondence with conventional ViT attention maps, validating that uDTW’s uncertainty naturally induces attention-like focus over visual tokens.
Through this extension, uDTW provides a principled, unified framework for structured alignment over tokenized visual representations. % The probabilistic formulation ensures that low-uncertainty tokens dominate the alignment, yielding both robust structured matching and interpretable, attention-like behavior, directly connecting heteroscedastic uncertainty, alignment, and attention in a mathematically grounded and rigorous manner.

%\vspace{0.1cm}
\noindent\textbf{Semantic structure emergence in visual tokens.}
To better understand the effect of uncertainty-aware alignment on visual representations, we visualize token embeddings before and after the learned uDTW projection. Specifically, we compare raw DINOv3 patch-token embeddings with embeddings transformed by the projection layer trained using the uDTW objective. Token embeddings are visualized by mapping the first three PCA components of the feature space into RGB channels.
Fig. \ref{fig:dinov3-udtw} shows that the original DINOv3 embeddings capture coarse semantic information but often exhibit diffuse activations and weak structural consistency. In contrast, embeddings learned with uDTW become more compact and object-centric, with clearer foreground regions and suppressed background responses.
This effect is particularly pronounced on ultra-fine-grained leaf datasets, where the projected embeddings preserve subtle structural characteristics such as contours, defects, and even vein patterns. These observations suggest that uDTW does not merely reweight pairwise similarities, but instead induces a reliability-aware token geometry that enhances structurally consistent and semantically meaningful correspondences.
Moreover, the projected embeddings exhibit stronger spatial continuity across neighboring regions. Object silhouettes and elongated structures %(\eg, tails, legs, and leaf boundaries) 
become noticeably more coherent after the uDTW projection. This behavior is consistent with the probabilistic alignment formulation, where low-uncertainty correspondences contribute more strongly across admissible alignment paths, while ambiguous or unreliable matches are systematically attenuated.

\begin{figure*}[tbp]
\centering
\subfloat[Time series forecasting.]{\label{fig:1}
\includegraphics[trim=0cm 0cm 0cm 0cm,clip=true,
width=0.23\linewidth]{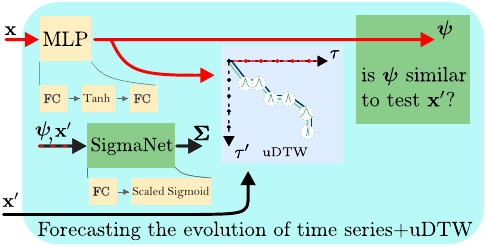}}
\subfloat[Supervised FSAR.]{\label{fig:2}
\includegraphics[trim=0cm 0cm 0cm 0cm,clip=true,
width=0.52\linewidth]{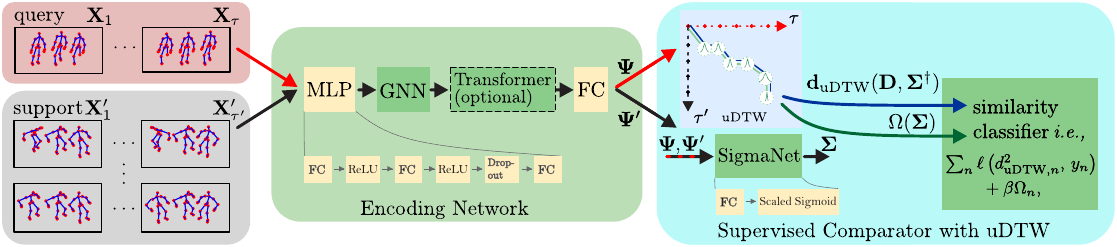}}
% \hfill
\subfloat[Unsupervised FSAR.]{\label{fig:3}
\includegraphics[trim=0cm 0cm 0cm 0cm,clip=true,
width=0.23\linewidth]{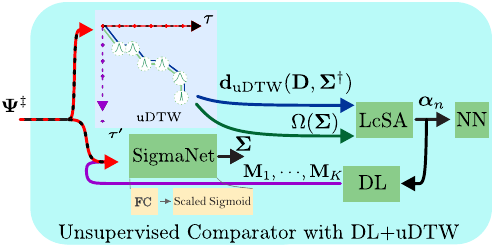}}
% \hfill
% \caption{
% Uncertainty-aware alignment framework across tasks.
% (a) Time series forecasting: predicted futures are aligned with ground truth using uDTW under uncertainty.
% (b) In supervised few-shot skeleton-based action recognition, query and support sequences are partitioned into temporal blocks, encoded into embeddings, and compared using uDTW with learned uncertainty.
% (c) Unsupervised setting: the same encoder is used, while only the comparator is replaced with a label-free objective.
% }
\caption{
Uncertainty-aware alignment across tasks.
(a) Time series forecasting: predicted sequences are aligned with ground truth using uDTW with learned uncertainty.
(b) Supervised few-shot action recognition: support and query sequences are encoded into temporal embeddings and matched via uncertainty-aware alignment.
(c) Unsupervised setting: the same encoder is retained, while alignment is optimized with a label-free objective.
}
\label{fig:pipe_all}
% \vspace{-0.3cm}
\end{figure*}

\section{Learning Framework for Structured Inputs}
\label{sec:general_framework}

Below, we present three representative pipeline formulations where uDTW acts as a core component for computing uncertainty-aware alignment distances over warped paths. %, spanning time series, skeleton sequences, and visual token classification.

\subsection{Time Series Forecasting and Classification}
\label{sec:fsl_ts}

%\vspace{0.1cm}
\noindent
\textbf{Encoder.} Given an observed prefix $\vx \in \mathbb{R}^{t}$ of a time series, %the encoder $f(\cdot; \mathcal{P})$ maps it to a predicted future $\vpsi \in \mathbb{R}^{\tau-t}$ using an MLP.
the encoder $f(\cdot; \mathcal{P})$ (following \cite{marco2017icml}) maps it to a predicted future $\vpsi \in \mathbb{R}^{\tau-t}$ using an MLP, where $\mathcal{P} \equiv [\mathcal{P}_{\text{MLP}}, \mathcal{P}_{\text{SN}}]$ denotes the parameters of the encoder and SigmaNet.
Here, $\vx' \in \mathbb{R}^{\tau-t}$ and $\vpsi \in \mathbb{R}^{\tau-t}$ are vectors representing the ground-truth future and predicted future of the time series, respectively. %, in contrast to $\mPsi$ used in skeleton-based action recognition and image classification tasks, which are matrix-valued token or block embeddings. 
SigmaNet predicts per-timestep positive uncertainties for both % the predicted future 
$\vpsi$ and %the ground-truth future 
$\vx'$. % , enforced via a sigmoid-based transformation as in the image classification setting, ensuring the uDTW regularizer is well-defined.

%\vspace{0.1cm}
\noindent
\textbf{Forecasting.} The model is trained to minimize the uDTW-based forecasting loss across $N$ training series (see Fig. \ref{fig:1}):
\begin{align}
\mathcal{L}_{\text{forecast}} = \sum_{n\in\idx{N}}% ^{N} 
d^2_{\text{uDTW}}(\vpsi_n, \vx'_n) + \beta  \Omega(\vpsi_n, \vx'_n),
\end{align}
where $d^2_{\text{uDTW}}$ measures temporal alignment and $\Omega$ incorporates predictive uncertainty.

%\vspace{0.1cm}
\noindent
\textbf{Classification.} Following \cite{marco2017icml}, we adopt the standard setup for this classical task. For each class $c$, the class prototype $\vmu_c \in \mathbb{R}^{\tau'}$ is estimated as the Fr\'echet mean of all training sequences in that class:
\begin{align}
\vmu_c = \argmin_{\vmu} \sum_{n\in \idx{N_c}} d^2_{\text{uDTW}}(\vx_n, \vmu) + \beta  \Omega(\vx_n, \vmu),
\end{align}
where $\tau'$ is set to the average sequence length across all classes. SigmaNet is trained jointly to predict per-timestep uncertainties for each prototype. 
In addition to the nearest centroid, we also consider the use of nearest neighbors.
Each test sequence $\vx$ is directly compared to all training sequences using uDTW, assigning the label of the closest sequence. % Uncertainty is obtained from the SigmaNet trained on the centroid task.

%\vspace{0.1cm}
\noindent
\textbf{Inference.} Inference follows the task-specific setting. %, including forecasting and classification.
\textit{(i) Forecasting:} For a new prefix $\vx$, the prediction is $\vpsi = f(\vx; \mathcal{P})$.
\textit{(ii) Classification:} Let $\mathcal{C}$ denote the set of classes and $\{\vx_n\}_{n=1}^{N}$ the training set. For nearest centroid,
$\hat{y} = \argmin_{c \in \mathcal{C}} d^2_{\text{uDTW}}(\vx, \vmu_c)\!+\!\beta\Omega(\vx, \vmu_c)$,
and for nearest neighbor,
$\hat{y} = \argmin_{n \in \{1,\dots,N\}} d^2_{\text{uDTW}}(\vx, \vx_n)\!+\!\beta\Omega(\vx, \vx_n)$,
with uncertainties incorporated via SigmaNet.

\subsection{Few-shot Skeleton-based Action Recognition}
\label{sec:fsl_ar}

%\vspace{0.1cm}
\noindent
\textbf{Encoder.} Each skeleton sequence with $J$ joints per frame is partitioned into $\tau$ temporal blocks of length $M$ with stride $S$. Each block $\mX \!\in\! \mathbb{R}^{3 \times J \times M}$ contains 3D joint coordinates. % $(x,y,z)$. 
Each block is encoded by a 3-layer MLP (FC, ReLU, FC, ReLU, Dropout, FC) producing a $d \!\times\!J$ feature map, followed by an S$^2$GC\cite{hao2021iclr} for spatial modeling and a Transformer encoder for temporal interactions. A final FC % layer 
produces the block embedding (see Fig. \ref{fig:2}). Formally, for query and support sequences:
$\mPsi \!=\! f(\mX; \mathcal{P}) \!\in\! \mathbb{R}^{d' \times \tau}$,
$\mPsi' \!=\! f(\mX'; \mathcal{P}) \!\in\! \mathbb{R}^{d' \times \tau'}$,
where the Supervised Comparator $\mathcal{P} \!=\! [\mathcal{P}_{\text{MLP}}, \mathcal{P}_{\text{S$^2$GC}}, \mathcal{P}_{\text{Transf}}, \mathcal{P}_{\text{FC}}, \mathcal{P}_{\text{SN}}]$ includes the parameters of % the 
MLP, S$^2$GC, Transformer, FC, % layer, 
and SigmaNet. SigmaNet predicts per-block scalar uncertainties. % $\sigma \!\in\! \mathbb{R}^{\tau \times 1}$.

%\vspace{0.1cm}
\noindent
\textbf{Supervised.} For $N$-way $Z$-shot episodes with $B$ episodes per batch, each query embedding is randomly sampled from one of the $N$ classes, and support embeddings contain $Z$ sequences per class. Similarity labels are set as $\delta_1 = 0$ for matching classes and $\delta_{n \neq 1} = 1$ for mismatches. The supervised uDTW loss is
\begin{align}
\!\!\!\!\mathcal{L}_{\text{sup}} 
\!=\!\!\! \sum_{b\in\idx{B}} \!\sum_{n\in\idx{N}} \!\sum_{z\in\idx{Z}}\!\!\! 
\left( d^2_{\text{uDTW}}(\mPsi_b, \mPsi'_{b,n,z}) \!-\! \delta_n \right)^2 
\!\!+\! \beta \Omega(\mPsi_b, \mPsi'_{b,n,z}).
\end{align}

%\vspace{0.1cm}
\noindent
\textbf{Unsupervised.} In the unsupervised setting, class labels are ignored. Query and support sequences in each episode are concatenated into $\{\mPsi^\ddag_{b,n}\}_{n=1}^{N\cdot Z + 1}$. Sequences are projected onto a dictionary $\{\mM_k\}_{k=1}^{K}$ using LCSA \cite{6126534}, with coefficients
\begin{align}
& \!\!\!\!%\forall_{k,n},\;
\alpha_{k,b,n}\!\!=\!\!\left\{\begin{array}{l}
\!\!\frac{\exp\big(\!-\frac{1}{\gamma'}d^2_{\text{uDTW}}\big(\mPsi^\ddag_{b,n}, \mM_k\big)\!\big)}{\!\!\!\!\sum\limits_{l\in\mathcal{M}(\mPsi^\ddag_{b,n};K')}\!\!\!\!\!\!\!\!\!\!\!\exp\big(\!-\frac{1}{\gamma'}d^2_{\text{uDTW}}\big(\mPsi^\ddag_{b,n}, \mM_l\big)\!\big)} \quad\!\!\text{if}\;\mM_k\!\in\!\mathcal{M}\!\big(\mPsi^\ddag_{b,n};K'\big),\\
0 \qquad\qquad\qquad\qquad\qquad\qquad\text{otherwise},
\end{array}\right.
    \nonumber\\[-16pt]
    &\label{eq:lcsa} 
\end{align}
where $K' \le K$ is the number of nearest dictionary atoms, $\mathcal{M}(\cdot)$ retrieves the nearest atoms using uDTW, $\tau'$ is set to the mean block count across the training set, and $\gamma' = 0.7$. Dictionary atoms are updated via
\begin{align}
\!\!\!\mM_k \!\gets\! \mM_k \!-\! \lambda_{\text{DL}}\!\!\! \sum_{n=1}^{N\cdot Z + 1}\!\!\! 
\nabla_{\mM_k} d^2_{\text{uDTW}}\Big(\mPsi^\ddag_{b,n}, \sum_{l=1}^{K} \alpha_{l,b,n} \mM_l \Big),
\end{align}
with \texttt{dict\_iter} = 10 and $\lambda_{\text{DL}} = 0.001$. Unsupervised Comparator (see Fig. \ref{fig:3}) is updated correspondingly:
\begin{align}
\!\!\!\!\mathcal{P} \!\gets\! \mathcal{P}\!-\!\lambda_{\text{EN}}\!\!\!\sum_{n=1}^{N\!\cdot\! Z+1}\!\!\!\nabla_{\mathcal{P}}d^2_{\text{uDTW}}\Big(\!\mPsi^\ddag_{b,n}, \mM'\!\Big) \!+\!\beta\Omega\Big(\!\mPsi^\ddag_{b,n}, \mM'\!\Big),
\end{align}
where $\mM'\!=\!\sum_{l=1}^K\alpha_{l,b,n}\mM_{l}$, with $\lambda_{\text{EN}} = 0.001$.

%\vspace{0.1cm}
\noindent
\textbf{Inference.} % During evaluation, 
Query embeddings are matched to support embeddings using uDTW in the supervised setting. In the unsupervised setting, both query and support sequences are encoded into LCSA coefficients and compared using histogram intersection kernel, which measures similarity between their soft assignment distributions. The predicted label is assigned to the support sequence with the highest similarity score.

\begin{figure}[tbp]
\vspace{-0.2cm}
\centering
\subfloat[$\beta$ (where $\lambda\!=\!1$)\label{fig:beta}]{
\includegraphics[trim=1cm 1cm 1cm 2.5cm,clip=true,
width=0.485\linewidth]{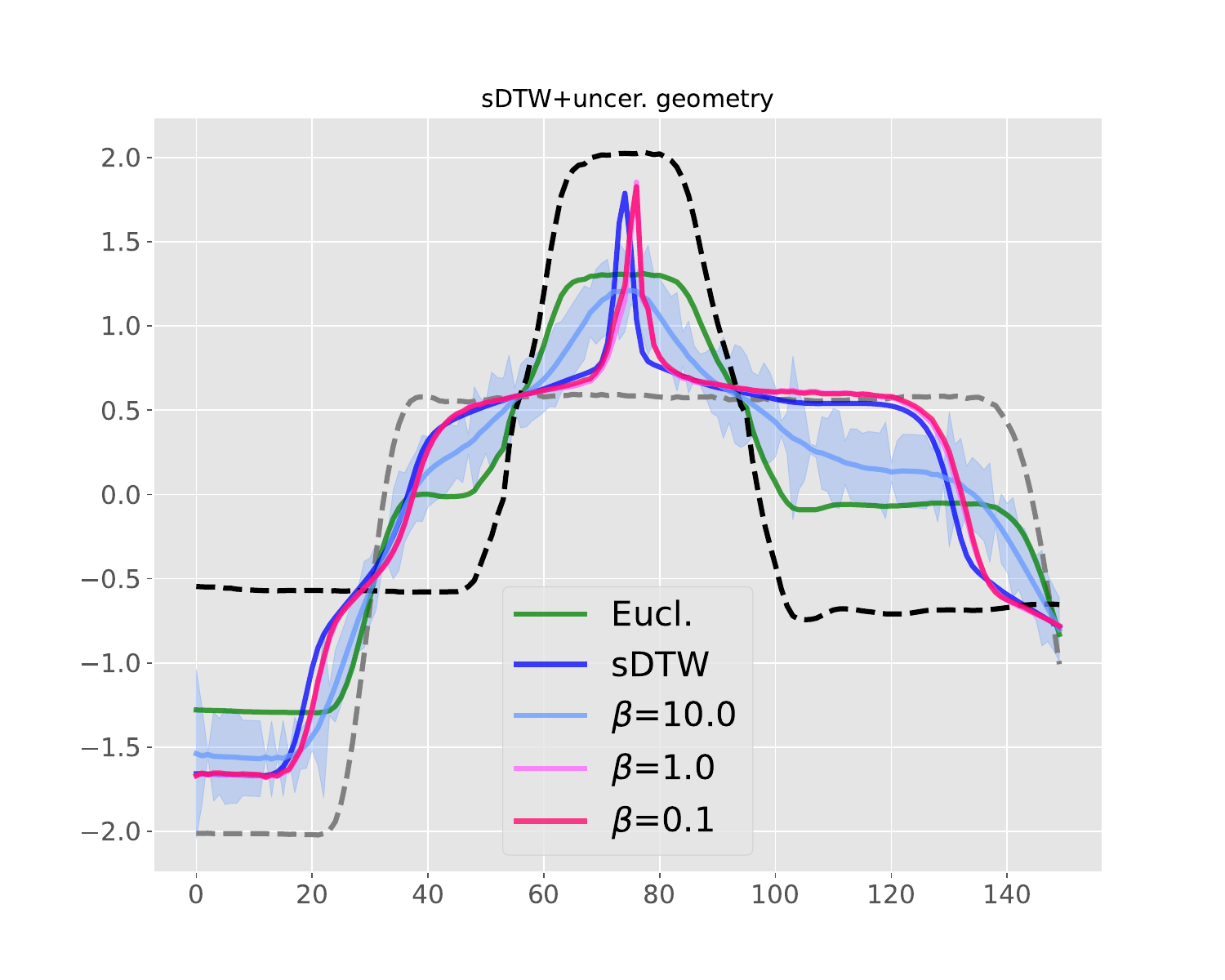}}
\hfill
\subfloat[$\lambda$ (where $\beta\!=\!10$)\label{fig:lambda}]{
\includegraphics[trim=1cm 1cm 1cm 2.5cm,clip=true,
width=0.485\linewidth]{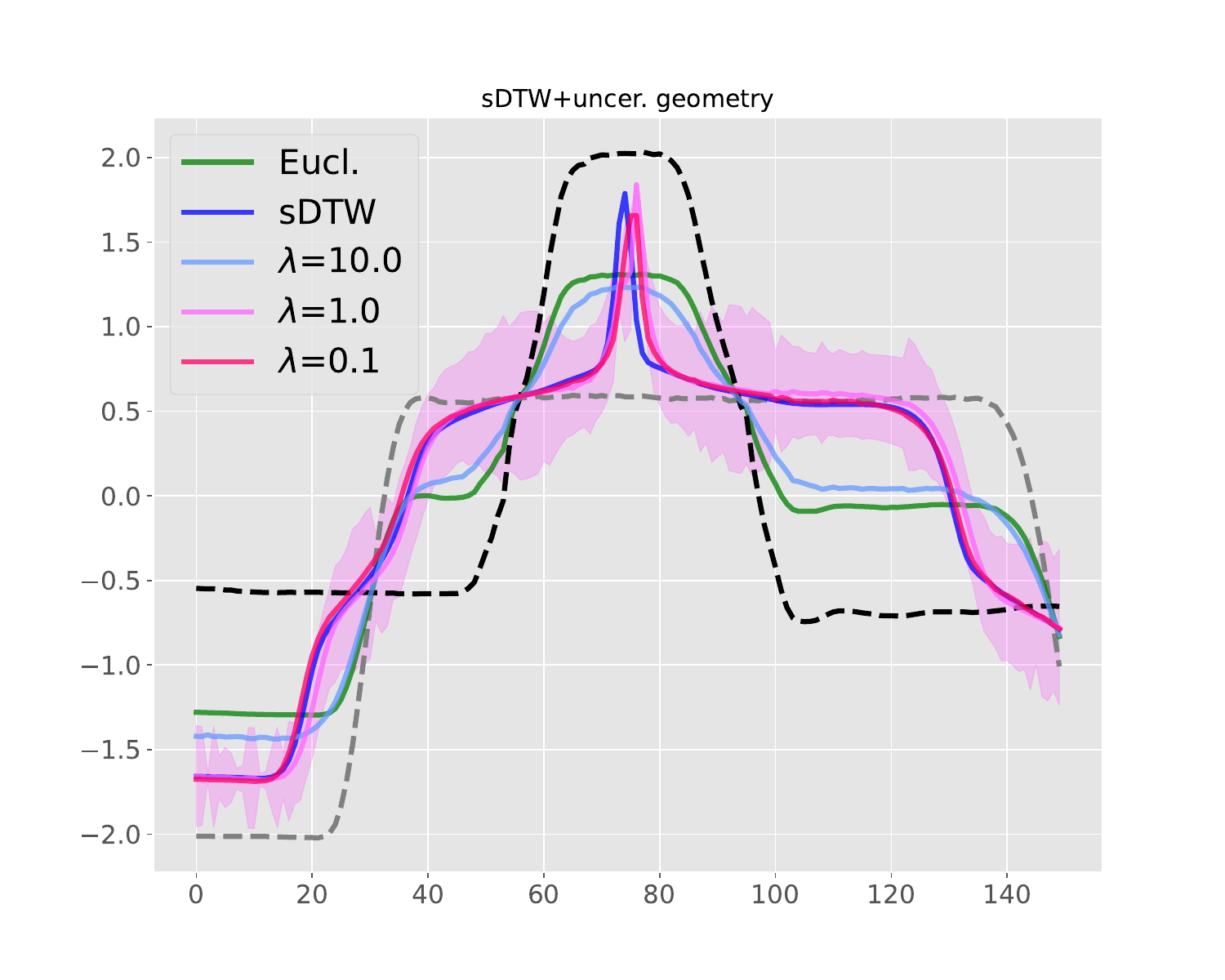}}
% \caption{Interpolation between two time series (grey and black dashed lines) 
% on the Gun Point dataset. 
% We compute the barycenter by solving 
% $\argmin\limits_{\vmu, \vsigma_\mu}\sum_{n=1}^{2}
% d^2_{\text{uDTW}}\big( \mD, \cov^\dag\big)+\beta\Omega\big(\cov\big)+\lambda\Omega'\big(\cov\big)$ where  $\mD\!=\!(\vx_n\mathbf{1}^\top\!-\!\mathbf{1}\vmu^\top)^2$ and  $\cov\!=\!\mathbf{1}\mathbf{1}^\top\!\!+\!\mathbf{1}\vsigma_\mu^\top$
% where $\vx_n$ is the given $n$-th time series. 
% $\beta\!\geq\!0$ controls the penalty for high matching uncertainty,
% $\Omega'$ is defined as in Eq. \eqref{eq:soft_uDTW_compact} but element-wise $\log\!\cov$ is replaced by element-wise $(\cov\!-\!1)^2$ so that
%  $\lambda\!\geq\!0$ favours uncertainty to remain close to one. 
%  $\beta$ and $\lambda$ control the uncertainty estimation and yield  different barycenters than  the Euclidean (green color) and sDTW (blue color) distances. As $\Omega$ and $\Omega'$ act similar, we only use $\Omega$ in our experiments.
% }
\caption{
Interpolation between two time series (grey and black dashed) on the Gun Point dataset. 
The barycenter is obtained by minimizing an uncertainty-aware uDTW objective, where $\beta$ penalizes high matching uncertainty and $\lambda$ regularizes it toward unit variance. 
Compared to Euclidean (green) and sDTW (blue) barycenters, uDTW produces distinct interpolations by adaptively weighting alignments according to uncertainty. 
}
\label{fig:interp}
% \vspace{-0.2cm}
\end{figure}

\subsection{Few-shot Image Classification}
\label{sec:fsl_image}

%\vspace{0.1cm}
\noindent
\textbf{Encoder.}
We adopt a pretrained vision backbone (\eg, DINOv3 with a ViT encoder), as few-shot learning benefits from transferable representations learned from large-scale datasets. % such as ImageNet. 
Given an input image, we extract the last-layer patch tokens and discard prefix tokens (\eg, class and register tokens). 
The resulting token sequence is projected into a $d'$-dimensional embedding space using a single bias-free linear layer without activation.
To model reliability, we use a SigmaNet that predicts a positive scalar uncertainty for each token. 
The set of learnable parameters is 
$\mathcal{P} \equiv [\mathcal{P}_{\text{proj}}, \mathcal{P}_{\text{SN}}]$, 
where $\mathcal{P}_{\text{proj}}$ denotes the projection layer and $\mathcal{P}_{\text{SN}}$ the uncertainty predictor.

%\vspace{0.1cm}
\noindent
\textbf{Supervised.}
% We follow the standard $N$-way $Z$-shot episodic setting. Each episode contains one query and $N \times Z$ support samples drawn from $N$ classes. A mini-batch consists of $B$ episodes.
Let $\{\mPhi_b\}_{b\in\idx{B}}$ denote the query feature maps and $\{\mPhi'_{b,n,z}\}_{b\in\idx{B},\,n\in\idx{N},\,z\in\idx{Z}}$ the corresponding support feature maps. For each episode $b$, the query shares the same label as the supports from class $n=1$, \ie,
$y(\mPhi_b) = y(\mPhi'_{b,1,z})$, $\forall z \in \idx{Z}$,
while supports from $n \neq 1$ belong to different classes. Accordingly, we define similarity labels $\delta_1 = 0$ for matching pairs and $\delta_{n\neq1} = 1$ otherwise.
The supervised objective minimizes a regression loss over pairwise distances:
\begin{align}
% \!\!
\!\!\mathcal{L}_{\text{sup}} 
\!=\!\!\! \sum_{b\in\idx{B}} \!\sum_{n\in\idx{N}} \!\sum_{z\in\idx{Z}}\!\!\! 
\left( d^2_{\text{uDTW}}(\mPhi_b, \mPhi'_{b,n,z}) \!-\! \delta_n \right)^2 
\!\!+\! \beta \Omega(\mPhi_b, \mPhi'_{b,n,z}).
\end{align}

%\vspace{0.1cm}
\noindent
\textbf{Unsupervised.} The Unsupervised Comparator is trained in a label-free (unsupervised) setting. 
Given a mini-batch of $B$ images, we generate two stochastic augmented views per image, yielding paired token embeddings $\{(\mPhi_i, \mPhi_i')\}_{i=1}^{B}$. 
The model is optimized using an uncertainty-aware objective:
\begin{equation}
% \!\!\!\!\!\!\!\!\!\!\!\!\!\!\!\!\
\!\!\mathcal{L}_{\text{unsup}}
\!\!= \!\!\frac{1}{B}\!\sum_{i\in\idx{B}}\!\left[\!- 
\log\!\frac{\exp\!\left(-d^2_{\text{uDTW}}(\mPhi_i, \mPhi_i')/\tau\right)}
{\sum_{j=1}^{B} \exp\!\left(-d^2_{\text{uDTW}}(\mPhi_i, \mPhi_j')/\tau\right)}
\!+\! {\beta} \Omega(\mPhi_i, \mPhi_i')\!\right],
\end{equation}
where $\tau$ is the temperature. Positive pairs, corresponding to two views of the same image, are pulled together in the embedding space, while the off-diagonal terms in the denominator serve as in-batch negatives and are pushed apart. 
This InfoNCE-style loss provides a principled framework for self-supervised learning of token embeddings with structured alignment via uDTW. 
After unsupervised pretraining, the learned projection and SigmaNet are frozen and reused in standard few-shot prototype evaluation.

%\vspace{0.1cm}
\noindent
\textbf{Inference.}  
During evaluation, class prototypes % $\mPhi_c$ 
are computed as the mean of the token embeddings in the support set for each class. 
In the supervised setting, query features can be matched either to individual support features or to these class prototypes using uDTW. 
In the unsupervised setting, both support and query samples are encoded using the frozen projector. 
Classification is then performed by computing the uDTW distance between each query % $\mPhi$ 
and the class prototypes, % $d^2_{\text{uDTW}}(\mPhi, \mPhi_c)$, 
and assigning the query to the class with the minimum distance. 
% The uncertainty regularizer is not applied during evaluation; it is used only during training to improve embedding robustness, ensuring that the few-shot decision rule aligns with standard prototype-based classification.

% \textbf{Dataset granularity.}  
% This pipeline applies uniformly across coarse/generic, fine, and ultra-fine-grained datasets. Support embeddings form class prototypes at the appropriate granularity, with $N$-way $K$-shot episodes adjusted accordingly. No changes to the encoder or training procedure are required.

\begin{figure}[tbp]
\centering
\subfloat[CBF\label{fig:cbf}]{
\includegraphics[trim=4cm 1.2cm 3.5cm 0.6cm,clip=true,
width=\linewidth]{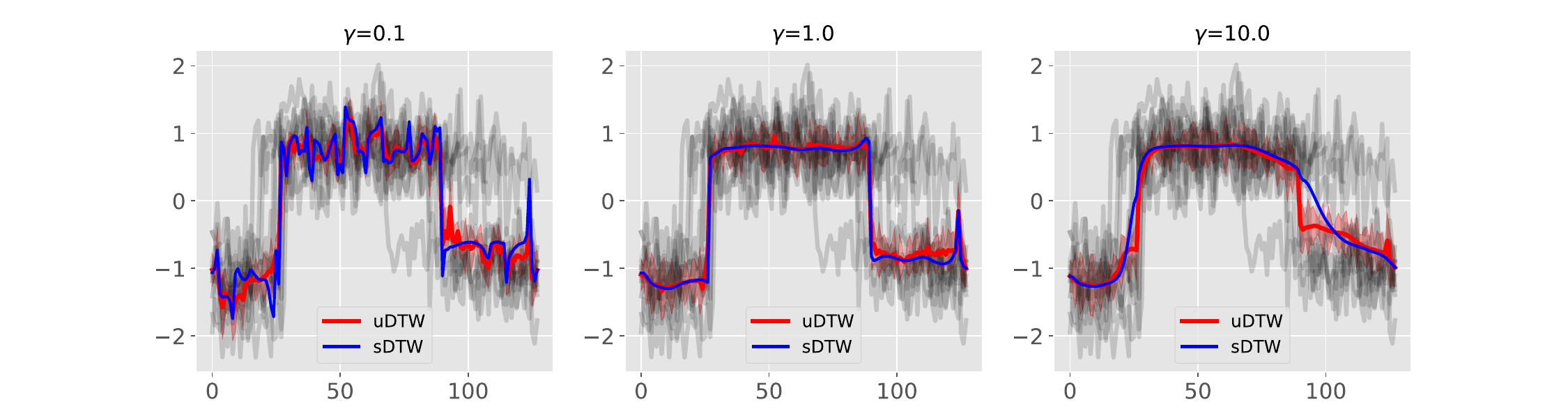}}\\% \hfill
\vspace{-0.2cm}
\subfloat[Synthetic Control\label{fig:syn}]{
\includegraphics[trim=4cm 1.2cm 3.5cm 0.6cm,clip=true,
width=\linewidth]{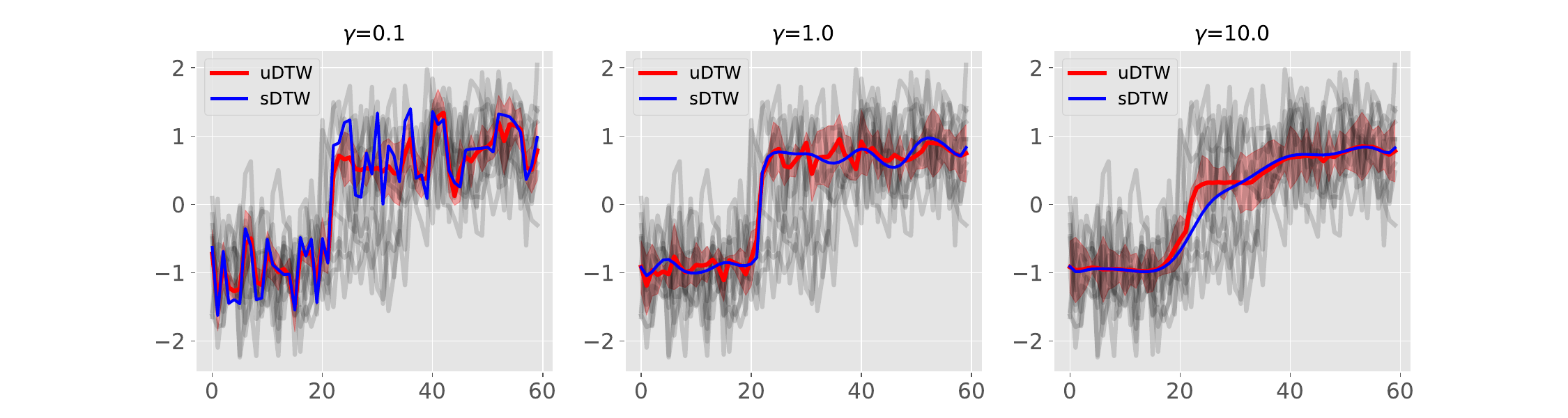}}
% \caption{Comparison of barycenter based on sDTW or uDTW on CBF  
% and Synthetic Control. 
% We visualize uncertainty around the barycenters in red color for uDTW. Our uDTW generates reasonable barycenters even when higher $\gamma$ values are used, \eg, $\gamma\!=\!10.0$. Higher $\gamma$ value leads to smooth barycenter but introducing higher uncertainty.
% }
\caption{Comparison of barycenters computed with sDTW and uDTW on CBF and Synthetic Control. Uncertainty for uDTW is shown in red. Even with high $\gamma$ (\eg, $\gamma\!=\!10.0$), uDTW produces reasonable barycenters; higher $\gamma$ smooths the barycenter but increases uncertainty.}
\label{fig:barycenter}
\vspace{-0.2cm}
\end{figure}

\section{Experiment}

%Evaluations and analyses are presented below.

% We evaluate uDTW across multiple tasks, including time series analysis, few-shot action recognition, and few-shot image classification spanning generic, fine-grained, and ultra-fine-grained categories. 
% We summarize datasets, evaluation protocols, and baselines below.

\subsection{Datasets, Setups, and Evaluation Protocols}

%\vspace{0.1cm}
\noindent
\textbf{Time series analysis.} We use the {\em UCR archive}~\cite{UCRArchive}, a standard benchmark %for time series classification and forecasting, 
which contains sequences from diverse domains such as astronomy, geology, and medical imaging, with variable lengths and complexity. For classification, we follow standard train/validation/test splits and evaluate nearest-neighbor and nearest-centroid classifiers. Forecasting is evaluated by using the first portion (60\%) of each sequence as input and the remainder (40\%) as output, with metrics including MSE and alignment-based distances (\eg, sDTW). % (DTW, sDTW div., and uDTW). 
Barycenter computation is performed following sDTW~\cite{marco2017icml}. % Baselines include Euclidean distance, DTW~\cite{marco2011icml}, and sDTW~\cite{marco2017icml}, and sDTW div.~\cite{pmlr-v130-blondel21a}.

%\vspace{0.1cm}
\noindent
\textbf{Few-shot action recognition (sequences).} Skeleton-based action recognition is evaluated on (i) {\em NTU-60}~\cite{Shahroudy_2016_NTURGBD}, which contains 56,880 RGB+D sequences over 60 action classes with high intra-class variability, (ii) {\em NTU-120}~\cite{Liu_2019_NTURGBD120}, an extension to 120 classes and 114,480 sequences captured from 106 subjects across 155 camera viewpoints, and (iii) {\em Kinetics-skeleton}~\cite{kay2017kinetics}, a large-scale human action dataset. % For Kinetics, 2D joint coordinates are extracted using OpenPose \cite{cao2019openpose}, and 3D coordinates are estimated via the network of Martinez et al.~\cite{martinez_2d23d} pre-trained on Human3.6M~\cite{Catalin2014Human3}. 
Evaluation follows few-shot protocols \cite{Liu_2019_NTURGBD120, wang2024meet}. %, typically 1-/5-shot, with training and test splits ensuring no subject overlap. 
Baselines include Matching Networks~\cite{miniimagenet_nips}, Prototypical Networks~\cite{snell2017prototypical}, and TAP~\cite{su2022temporal}, chosen for their strong performance in few-shot action recognition.

%\vspace{0.1cm}
\noindent
\textbf{Few-shot image classification.} For \textit{generic} few-shot classification, we evaluate on CIFAR-FS~\cite{cifarfs_dataset}, miniImageNet~\cite{miniimagenet_nips}, and tieredImageNet~\cite{tieredimagenet_iclr}, which contain diverse object categories with moderate inter-class variation. \textit{Fine-grained} datasets include CUB-200-2011~\cite{cub2002011}, Stanford Dogs~\cite{stanforddogs}, Stanford Cars~\cite{stanfordcars}, Aircraft~\cite{aircraft}, and NABirds~\cite{nabirds}, which require discriminative localized features to distinguish subtle differences within a category. \textit{Ultra-fine-grained} (UFGVC) datasets \cite{ufgvc}, such as Cotton80 and SoyLocal, feature minimal visual variation and very few samples per class. 
Standard evaluation uses 5-way 1-/5-shot episodes for generic and fine-grained datasets, and 5-way 1-/2-shot for UFGVC datasets due to the limited number of samples per class. \footnote{We define a new few-shot protocol on Cotton80 and SoyLocal under 5-way 1-shot and 5-way 2-shot settings, with classes evenly split into base and novel categories for both supervised and unsupervised evaluation.% The same evaluation protocol is applied to both supervised and unsupervised settings. %, with no validation set.
} In the unsupervised setting, evaluations include Omniglot~\cite{lake2015human} and miniImageNet for generic few-shot tasks, as well as cross-domain tests on CUB-200-2011, Stanford Cars, Cotton80 and SoyLocal, where embeddings are learned on % a source dataset 
miniImageNet in an unsupervised manner, frozen, and then applied to target-domain few-shot episodes. Baselines comprise supervised methods such as LA-PID~\cite{lapid_tpami}, CPEA~\cite{ceta_iccv}, ASCM~\cite{ascm_pr}, as well as unsupervised approaches including CACTUs\cite{cactu_iclr19}, UMTRA\cite{umtra_nips19}, LASIUM\cite{lasium_iclr21}, PsCo\cite{psco_iclr23}, Meta-GMVAE\cite{metagmvae_iclr2021}, and Meta-SVEBM\cite{svebm_nipsw21}. We adopt DINOv3 (ViT-S/16), pretrained on LVD-1689M~\cite{dinov3}, as a frozen feature backbone. On top, we train a projection layer (single bias-free FC, 384$\rightarrow$256) for each objective, and an additional SigmaNet for uDTW (a single FC, 256$\rightarrow$256, followed by feature-wise average pooling and a scaled sigmoid, as in~\cite{wang2022uncertainty}). 
We evaluate three objectives: squared Euclidean distance, sDTW, and uDTW. Euclidean distance operates on pooled class prototypes, whereas sDTW and uDTW perform token-level alignment over patch embeddings, with uDTW further using per-token uncertainty.% These baselines represent state-of-the-art supervised and unsupervised few-shot methods, providing a comprehensive benchmark for evaluating uDTW.

\begin{table}[tbp]
% \caption{Classification accuracy (mean$\pm$std) on UCR archive by the nearest neighbor and the nearest centroid classifiers.
% In the column we indicate which distance was used for computing the class prototypes. $K$ is the number of nearest neighbors in this context.}
\caption{Classification accuracy (mean$\pm$std) on the UCR archive. Columns indicate the distance used for class prototypes; $K$ denotes the number of nearest neighbors.}
\vspace{-0.5cm}
\label{tab:ucrclassification}
\begin{center}
\resizebox{\linewidth}{!}{\begin{tabular}{lcccc}
\toprule
 & \multicolumn{3}{c}{\bf Nearest neighbor} &\multirow{2}{*}{\bf Nearest centroid}\\
 \cmidrule{2-4}
 & $K = 1$ & $K = 3$ & $K = 5$ & \\
\midrule
Euclidean & 71.2$\pm$17.5& 72.3$\pm$18.1& 73.0$\pm$16.7&61.3$\pm$20.1\\
DTW~\cite{marco2011icml} & 74.2$\pm$16.6&75.0$\pm$17.0&75.4$\pm$15.8&65.9$\pm$18.8 \\
sDTW~\cite{marco2017icml} & 76.2$\pm$16.6&77.2$\pm$15.9& 78.0$\pm$16.5&70.5$\pm$17.6\\ 
sDTW div.~\cite{pmlr-v130-blondel21a}& 78.6$\pm$16.2& 79.5$\pm$16.7&80.1$\pm$16.5\!\!& 70.9$\pm$17.8\\ 
\hdashline

uDTW & 80.0$\pm$15.0& 81.2$\pm$17.8&83.3$\pm$16.2& 72.2$\pm$16.0\\
\bottomrule
\end{tabular}}
\end{center}
\vspace{-0.6cm}
\end{table}

\begin{figure}[tbp]
\vspace{-0.2cm}
\centering
\subfloat[ECG200\label{fig:ecg200_pred}]{
\includegraphics[trim=0.6cm 9.2cm 12.6cm 0cm,clip=true,
width=0.485\linewidth]{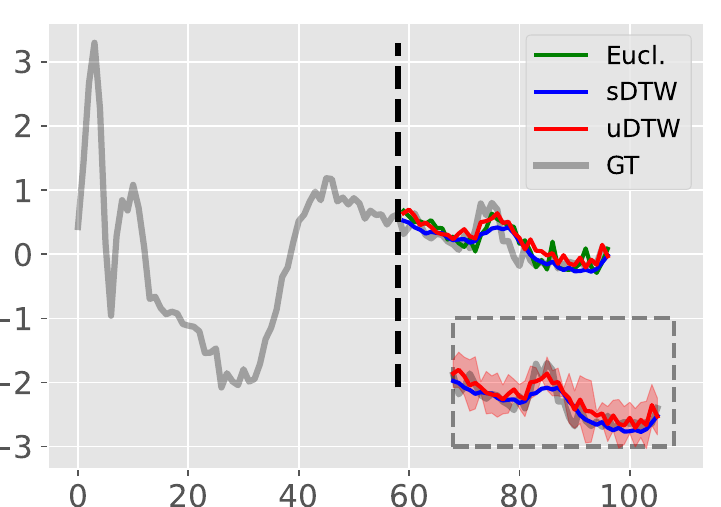}}
\hfill
\subfloat[ECG5000\label{fig:ecg5000_pred}]{
\includegraphics[trim=12.6cm 0.5cm 0.6cm 9cm,clip=true,
width=0.485\linewidth]{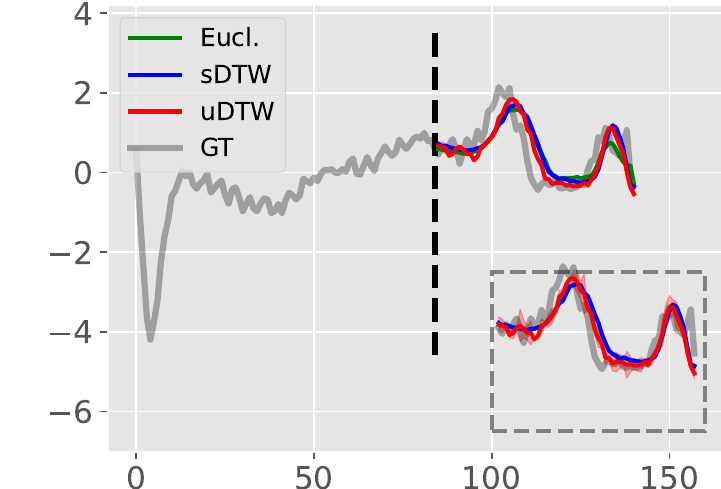}}

% \caption{Given the first part of a time series, we train 3 multi-layer perception (MLP) to predict the remaining part, we use the Euclidean, sDTW or uDTW distance per MLP. We use ECG200 and ECG5000 in UCR archive,  and display  the prediction obtained for the given test sample with either of these 3 distances and the ground truth (GT). Oftentimes, we observe that uDTW  helps predict the sudden changes well.}
\caption{Prediction of time series continuation using MLPs with Euclidean, sDTW, or uDTW distances. Given the first part of a series, three MLPs predict the remaining values on ECG200 and ECG5000 (UCR archive). Red shading indicates uDTW uncertainty. Predictions and ground truth (GT) for a sample are shown. uDTW often captures sudden changes more accurately.}
\label{fig:pred}
\vspace{-0.2cm}
\end{figure}

\begin{table}[tbp]
% \caption{Time series forecasting results evaluated with MSE, DTW, sDTW div. and uDTW metrics on ECG5000, averaged over 100 runs (mean$\pm$std). 
% Best method(s) are highlighted in bold using Student's $t$-test.
% Column-wise distances indicate the distance used during training. Row-wise distances indicate the distance used to compare prediction with the groundtruth at the test time (lower values are better). 
% }
\caption{Time series forecasting on ECG5000 (100 runs, mean$\pm$std), with column-wise distances used for training and row-wise distances for test-time comparison with ground truth (lower is better). Best method(s) in bold (Student's $t$-test).}
\vspace{-0.5cm}
\label{tab:pred}
\begin{center}
\begin{tabular}{lcccc}
\toprule
&  MSE & DTW & sDTW div. & uDTW\\
\hline
Euclidean & {\bf 32.1$\pm$1.62}&20.0$\pm$0.18& 15.3$\pm$0.16 & 14.4$\pm$0.18\\
sDTW~\cite{marco2017icml} &38.6$\pm$6.30&{\bf 17.2$\pm$0.80}&22.6$\pm$3.59& 32.1$\pm$2.25\\
sDTW div.~\cite{pmlr-v130-blondel21a} &24.6$\pm$1.37&38.9$\pm$5.33&{\bf 20.0$\pm$2.44}& 15.4$\pm$1.62\\

uDTW &23.0$\pm$1.22&{\bf 16.7$\pm$0.08}& {\bf 16.8$\pm$1.62}&{\bf 8.27$\pm$0.79} \\
\bottomrule
\end{tabular}
\end{center}
\vspace{-0.5cm}
\end{table}

\subsection{Analysis and Prediction of Time Series}

\subsubsection{Fr\'echet Mean of Time Series} We visualize the Fr\'echet mean under Euclidean distance, sDTW, and our uDTW. % We visually inspect the Fr\'echet mean computed with the Euclidean distance, sDTW, and our uDTW.

%\vspace{0.1cm}
\noindent
\textbf{Setup.} Following \cite{marco2017icml}, for each dataset in UCR, we randomly select a class and pick 10 time series to compute its barycenter. Our uDTW barycenter objective is minimized using L-BFGS \cite{lbfgs} with a maximum of 100 iterations.

%\vspace{0.1cm}
\noindent
\textbf{Qualitative results.} First, we average two time series (Fig.~\ref{fig:interp}). Averaging under uDTW produces substantially different results compared to Euclidean and sDTW. Fig.~\ref{fig:barycenter} shows barycenters from sDTW and uDTW. Our uDTW yields more reasonable barycenters, even for large $\gamma$ (\eg, $\gamma=10$, right column), where change points in the red curve appear sharper. 
With low smoothing ($\gamma=0.1$), both uDTW and sDTW can get stuck in poor local minima, but uDTW produces fewer sharp peaks due to the uncertainty measure. Higher $\gamma$ values smooth the barycenter but increase uncertainty (see comparison of $\gamma=0.1$ \vs $\gamma=10.0$). At $\gamma=1$, barycenters from sDTW and uDTW align well with the original time series.

\subsubsection{Classification of Time Series}   

We evaluate uDTW on UCR time series classification, using a $K$-nearest neighbor classifier with softmax for prediction.

%\vspace{0.1cm}
\noindent
\textbf{Setup.} Each dataset is split 50/25/25 (train/val/test). % We evaluate $K \in \{1,2,3\}$.% Each dataset is split into 50\% training, 25\% validation, and 25\% testing. 
The classifier is evaluated for $K \in \{1,2,3\}$.

%\vspace{0.1cm}
\noindent
\textbf{Quantitative results.} Table~\ref{tab:ucrclassification} compares uDTW with Euclidean, DTW, sDTW, and sDTW div. Using uDTW to compute barycenters improves the nearest centroid classifier by approximately 2\% over sDTW div. For the $K$-nearest neighbor classifier, uDTW increases accuracy by 1.4\%, 1.7\%, and 3.2\% for $K=1,2,3$, respectively.

\subsubsection{Forecasting the Evolution of Time Series}

We evaluate uDTW for forecasting future values of time series, aiming to capture both smooth trends and abrupt changes.

%\vspace{0.1cm}
\noindent
\textbf{Qualitative results.} Fig.~\ref{fig:pred} visualizes the predictions. sDTW and uDTW predictions can differ, but uDTW better matches the ground truth and detects sharp changes more accurately. % While sDTW and uDTW predictions sometimes agree, they can differ noticeably, with uDTW more confidently predicting sharp changes.

%\vspace{0.1cm}
\noindent
\textbf{Quantitative results.} We validate uDTW on ECG5000 from the UCR archive, which contains 5000 electrocardiograms (500 for training and 4500 for testing), each of length 140. 
To evaluate predictions, we use two metrics: (i) MSE for per-time-step errors, and (ii) DTW, sDTW div., and uDTW to compare the overall shape of the time series. Shape-based metrics are important because varying series lengths can bias MSE, ignoring trends in the signal. We apply a Student's $t$-test (significance level 0.05) over 100 runs to highlight the best performance. Table~\ref{tab:pred} shows that uDTW achieves near-best results on both MSE and shape metrics. % (lower scores indicate better performance).

\begin{table}[tbp]
\centering
\caption{Evaluations on NTU-60 (left) and NTU-120 (right).}
\vspace{-0.2cm}
\label{tab:ntu_combined}
\resizebox{\linewidth}{!}{
\begin{tabular}{lcccccccccc}
\toprule
& \multicolumn{5}{c}{NTU-60} & \multicolumn{5}{c}{NTU-120} \\
\cmidrule(lr){2-6} \cmidrule(lr){7-11}
\#classes 
& 10 & 20 & 30 & 40 & 50 
& 20 & 40 & 60 & 80 & 100 \\
\midrule

& \multicolumn{10}{c}{Supervised} \\
\midrule
MatchNets~\cite{miniimagenet_nips} 
& 46.1 & 48.6 & 53.3 & 56.3 & 58.8 
& 20.5 & 23.4 & 25.1 & 28.7 & 30.0 \\

ProtoNet~\cite{snell2017prototypical} 
& 47.2 & 51.1 & 54.3 & 58.9 & 63.0 
& 21.7 & 24.0 & 25.9 & 29.2 & 32.1 \\

TAP~\cite{su2022temporal}
& 54.2 & 57.3 & 61.7 & 64.7 & 68.3 
& 31.2 & 37.7 & 40.9 & 44.5 & 47.3 \\

\hdashline
Euclidean 
& 38.5 & 42.2 & 45.1 & 48.3 & 50.9
& 18.7 & 21.3 & 24.9 & 27.5 & 30.0 \\

sDTW~\cite{marco2017icml} 
& 53.7 & 56.2 & 60.0 & 63.9 & 67.8
& 30.3 & 37.2 & 39.7 & 44.0 & 46.8 \\

sDTW div.~\cite{pmlr-v130-blondel21a} 
& 54.0 & 57.3 & 62.1 & 65.7 & 69.0
& 30.8 & 38.1 & 40.0 & 44.7 & 47.3 \\

uDTW 
& 56.9 & 61.2 & 64.8 & 68.3 & 72.4
& 32.2 & 39.0 & 41.2 & 45.3 & 49.0 \\

\midrule
& \multicolumn{10}{c}{Unsupervised} \\
\midrule
Euclidean 
& 20.9 & 23.7 & 26.3 & 30.0 & 33.1
& 13.5 & 16.3 & 20.0 & 24.9 & 26.2 \\

sDTW~\cite{marco2017icml} 
& 35.6 & 45.2 & 53.3 & 56.7 & 61.7
& 20.1 & 25.3 & 32.0 & 36.9 & 40.9 \\

sDTW div.~\cite{pmlr-v130-blondel21a} 
& 36.0 & 46.1 & 54.0 & 57.2 & 62.0
& 20.8 & 26.0 & 33.2 & 37.5 & 42.3 \\

uDTW 
& 37.0 & 48.3 & 55.3 & 58.0 & 63.3
& 22.7 & 28.3 & 35.9 & 39.4 & 44.0 \\

\bottomrule
\end{tabular}}
\vspace{-0.3cm}
\end{table}

\begin{table}[tbp]
\caption{Evaluations on 2D and 3D Kinetics-skeleton.}
\vspace{-0.5cm}
\label{tab:kinetics}
\begin{center}
\begin{tabular}{lcccc}
\toprule
& \multicolumn{2}{c}{Supervised} & \multicolumn{2}{c}{Unsupervised}\\
\cmidrule(lr){2-3}\cmidrule(lr){4-5}
& 2D & 3D & 2D & 3D\\
\midrule
Euclidean & 21.2 & 23.1 & 12.7 & 13.3\\
TAP~\cite{su2022temporal}
& 32.9 & 36.0 & - & -\\
sDTW~\cite{marco2017icml}  & 34.7 & 39.6 & 23.3& 28.3\\
sDTW div.~\cite{pmlr-v130-blondel21a}  & 35.0 & 40.1 & 24.0& 28.9\\

uDTW & 35.5 & 42.0 & 25.9 & 32.7\\
\bottomrule
\end{tabular}
\end{center}
\vspace{-0.7cm}
\end{table}

\begin{table}[tbp]
\centering
\caption{Few-shot accuracy (\%) on generic image classification benchmarks. Best in bold, second-best underlined.}
\vspace{-0.2cm}
\label{tab:generic}
\resizebox{\linewidth}{!}{\begin{tabular}{llcc}
\toprule
Dataset & Method & 1-shot & 5-shot \\
\midrule
\multirow{8}{*}{CIFAR-FS}
& CPEA \cite{ceta_iccv}& ${77.82 \pm 0.66}$ & ${88.98 \pm 0.45}$ \\
& ASCM \cite{ascm_pr}& $77.25 \pm 0.75$ & ${88.83 \pm 0.58}$ \\
% & NegCosIC \cite{negcosic}& $73.98 \pm 0.65$ & $87.33 \pm 0.48$ \\
& L2TT \cite{l2tt_siam}& $73.60 \pm 0.10$ & $85.80 \pm 0.10$ \\
& LA-PID \cite{lapid_tpami}& $73.20 \pm 0.40$ & $85.80 \pm 0.30$ \\
& SSL-ProtoNet \cite{sslprotonet_esa}& $60.41 \pm 0.52$ & $76.52 \pm 0.38$ \\
\cdashline{2-4}
& Euclidean & $74.01 \pm 0.86$ & $85.45 \pm 0.57$ \\
& sDTW & \underline{$79.96 \pm 0.83$} & \underline{$90.24 \pm 0.50$} \\
& uDTW & $\mathbf{80.32 \pm 0.81}$ & $\mathbf{91.15 \pm 0.50}$ \\
\midrule
\multirow{14}{*}{miniImageNet}
& SemFew-Trans \cite{semfew_cvpr}& $78.94 \pm 0.66$ & $86.49 \pm 0.50$ \\
% & SemFew \cite{semfew_cvpr}& $77.63 \pm 0.63$ & $83.04 \pm 0.48$ \\
& LA-PID \cite{lapid_tpami}& $72.20 \pm 0.50$ & $87.30 \pm 0.50$ \\
& CPEA \cite{ceta_iccv}& $71.97 \pm 0.65$ & $87.06 \pm 0.38$ \\
% & EPNet \cite{epnet_jmlr22}& $70.74 \pm 0.85$ & $84.34 \pm 0.53$ \\
& CORE \cite{core_tpami}& $70.50 \pm 0.44$ & $86.28 \pm 0.29$ \\
& ANROT-HELANet \cite{anrothelanet}& $69.40 \pm 0.30$ & ${88.10 \pm 0.40}$ \\
& ASCM \cite{ascm_pr}& $69.35 \pm 0.65$ & $84.87 \pm 0.58$ \\
% & COSOC \cite{cosoc_nips}& $69.28 \pm 0.49$ & $85.16 \pm 0.42$ \\
& FA-adapter \cite{faadapter_pr}& $68.36 \pm 0.45$ & $83.44 \pm 0.28$ \\
& TNPNet \cite{tnpnet_nc}& $68.22 \pm 0.87$ & $82.50 \pm 0.82$ \\
& HELA-VFA \cite{helavfa_wacv}& $68.20 \pm 0.30$ & $86.70 \pm 0.70$ \\
& MetaDiff \cite{metadiff_aaai}& $64.99 \pm 0.77$ & $81.21 \pm 0.56$ \\
& SSL-ProtoNet \cite{sslprotonet_esa}& $52.58 \pm 0.45$ & $70.87 \pm 0.36$ \\
\cdashline{2-4}
& Euclidean & $81.66 \pm 0.81$ & $92.56 \pm 0.42$ \\
& sDTW & $\underline{85.82 \pm 0.69}$ & $\underline{95.00 \pm 0.28}$ \\
& uDTW & $\mathbf{88.05 \pm 0.63}$ & $\mathbf{95.59 \pm 0.28}$ \\

\midrule
\multirow{11}{*}{tieredImageNet}
& SSL-ProtoNet \cite{sslprotonet_esa}& $55.14 \pm 0.49$ & $74.23 \pm 0.40$ \\
% & COSOC \cite{cosoc_nips}& $73.57 \pm 0.43$ & $87.57 \pm 0.10$ \\
& ASCM \cite{ascm_pr}& $69.35 \pm 0.65$ & $84.87 \pm 0.58$ \\
& CPEA \cite{ceta_iccv}& $76.93 \pm 0.70$ & ${90.12 \pm 0.45}$ \\
& LA-PID \cite{lapid_tpami}& $68.10 \pm 0.50$ & ${90.60 \pm 0.50}$ \\
% & EPNet \cite{epnet_jmlr22}& $78.50 \pm 0.91$ & $88.36 \pm 0.57$ \\
& HELA-VFA \cite{helavfa_wacv}& $72.50 \pm 0.50$ & $87.60 \pm 0.10$ \\
& MetaDiff \cite{metadiff_aaai}& $64.99 \pm 0.77$ & $81.21 \pm 0.56$ \\
% & SemFew \cite{semfew_cvpr}& $77.63 \pm 0.63$ & $83.04 \pm 0.48$ \\
& FA-adapter \cite{faadapter_pr}& $68.36 \pm 0.45$ & $86.49 \pm 0.50$ \\
& SemFew-Trans \cite{semfew_cvpr}& $78.94 \pm 0.66$ & $86.49 \pm 0.50$ \\
\cdashline{2-4}
& Euclidean & ${77.71 \pm 0.89}$ & $88.49 \pm 0.57$ \\
& sDTW & \underline{$81.54 \pm 0.81$} & \underline{$92.37 \pm 0.47$} \\
& uDTW & $\mathbf{83.44 \pm 0.78}$ & $\mathbf{93.07 \pm 0.46}$ \\

\bottomrule
\end{tabular}}
\vspace{-0.5cm}
\end{table}

\begin{table}[tbp]
\centering
\caption{Few-shot classification accuracy (\%) on fine-grained datasets. % Results are reported for 1-shot and 5-shot settings. Best and second-best results are highlighted in bold and underline, respectively.
}
\vspace{-0.2cm}
\label{tab:finegrained}
\resizebox{\linewidth}{!}{\begin{tabular}{llcc}
\toprule
Dataset & Method & 1-shot & 5-shot \\
\midrule
\multirow{12}{*}{CUB-200-2011}
& AFS-FR \cite{afsfr_pr}& $\mathbf{94.40 \pm 0.13}$ & $\mathbf{97.96 \pm 0.05}$ \\
% & NegCosIC \cite{negcosic} & ${85.68 \pm 0.78}$ & $94.66 \pm 0.41$ \\ 
& CORE \cite{core_tpami}& $82.36 \pm 0.41$ & $91.89 \pm 0.30$ \\
& BSFA \cite{bfsa_tcsvt}& $82.27 \pm 0.46$ & $90.76 \pm 0.26$ \\
% & MCNet \cite{mcnet_nn}& $81.72 \pm 0.43$ & $92.62 \pm 0.23$ \\
% & RENet \cite{renet_iccv}& $79.49 \pm 0.44$ & $91.11 \pm 0.24$ \\
& MLCN \cite{mlcn_icme}& $77.96 \pm 0.44$ & $91.20 \pm 0.24$ \\
& Sun et al. \cite{egal_2026}& $75.14$ & $88.87$ \\
& S2M2 \cite{s2m2_wacv}& $72.92 \pm 0.83$ & $86.55 \pm 0.51$ \\
& RSaD \cite{rsad_tm}& $71.15$ & $84.03$ \\
& T2L \cite{t2l_kbs}& $71.04$ & $83.44$ \\
& TripletMAML \cite{triplemaml_tai}& $70.46 \pm 0.17$ & $81.43 \pm 0.86$ \\
\cdashline{2-4}
& Euclidean & $72.00 \pm 0.93$ & $85.61 \pm 0.59$ \\
& sDTW & $84.17 \pm 0.77$ & $93.74 \pm 0.44$ \\
& uDTW & \underline{$87.77 \pm 0.68$} & \underline{$95.08 \pm 0.38$} \\

\midrule
\multirow{6}{*}{Stanford Dogs}
& T2L \cite{t2l_kbs}& $52.12$ & $70.83$ \\
& FOT \cite{fot_nc}& $49.32 \pm 0.74$ & $68.18 \pm 0.69$ \\
& RSaD \cite{rsad_tm}& $73.75 \pm 0.93$ & $86.65 \pm 0.54$ \\
\cdashline{2-4}
& Euclidean & $71.48 \pm 0.95$ & $86.31 \pm 0.54$ \\
& sDTW & \underline{$75.32 \pm 0.80$} & \underline{$89.97 \pm 0.45$} \\
& uDTW & $\mathbf{83.38 \pm 0.73}$ & $\mathbf{93.58 \pm 0.37}$ \\

\midrule
\multirow{7}{*}{Stanford Cars}
% & FOT \cite{fot_nc}& $54.55 \pm 0.73$ & $73.69 \pm 0.65$ \\
& T2L \cite{t2l_kbs}& $56.80$ & $74.10$ \\
& BTG-Net \cite{btgnet_acmmm}& $\underline{90.28 \pm 0.34}$ & $\mathbf{96.78 \pm 0.15}$ \\
& BSFA \cite{bfsa_tcsvt}& $88.93 \pm 0.38$ & $95.20 \pm 0.20$ \\
& RSaD \cite{rsad_tm}& $87.27 \pm 0.70$ & $95.01 \pm 0.49$ \\
\cdashline{2-4}
& Euclidean & $68.34 \pm 0.92$ & $83.42 \pm 0.65$ \\
& sDTW & $82.58 \pm 0.83$ & $94.82 \pm 0.38$ \\
& uDTW & $\mathbf{90.45 \pm 0.63}$ & \underline{$96.64 \pm 0.31$} \\

\midrule
\multirow{7}{*}{Aircraft}
& BSNet \cite{bsnet_tip}& $64.83 \pm 1.00$ & $80.25 \pm 0.67$ \\
& FRN \cite{frn_cvpr21}& 70.17 & 83.81 \\
& CTX \cite{ctx_nips20}& 65.60 & 80.20 \\
& LE-ProtoPNet \cite{leprotopnet_tip26} & \underline{82.82} & \underline{89.13} \\
\cdashline{2-4}
& Euclidean & $62.58 \pm 1.00$ & $76.13 \pm 0.79$ \\
& sDTW & $74.02 \pm 1.02$ & $88.03 \pm 0.67$ \\
& uDTW & $\mathbf{83.89 \pm 0.88}$ & $\mathbf{91.68 \pm 0.56}$ \\

\midrule
\multirow{6}{*}{NABirds}
& CovaMNet \cite{covamnet_aaai19} & $66.29 \pm 0.82$ & $82.54 \pm 0.87$ \\
& LRPABN \cite{lrpabn_acmmm21}& $67.73 \pm 0.81$ & $81.62 \pm 0.58$\\
& PACNet \cite{pacnet_tmm24}& $75.30 \pm 0.90$ & $88.20 \pm 0.60$ \\
\cdashline{2-4}
& Euclidean & $71.75 \pm 1.32$ & $76.17 \pm 1.52$ \\
& sDTW & \underline{$81.19 \pm 0.79$} & \underline{$92.85 \pm 0.42$} \\
& uDTW & $\mathbf{85.72 \pm 0.72}$ & $\mathbf{94.49 \pm 0.37}$ \\
\bottomrule
\end{tabular}}
\vspace{-0.2cm}
\end{table}

\begin{table}[tbp]
\caption{Few-shot accuracy (\%) on UFGVC benchmarks. % Best results are in bold.
}
\vspace{-0.2cm}
\centering
\label{tab:untra-finegrained}
\resizebox{\linewidth}{!}{\begin{tabular}{lllcc}
\toprule
& Dataset & Method & 1-shot & 2-shot \\
\midrule
\multirow{6}{*}{Supervised} & \multirow{3}{*}{Cotton80}
& Euclidean & $60.65 \pm 1.40$ & $71.07 \pm 1.60$ \\
& & sDTW & $64.02 \pm 1.38$ & $71.40 \pm 1.61$ \\
& & uDTW & $\mathbf{64.72 \pm 1.35}$ & $\mathbf{75.47 \pm 1.54}$ \\
\cline{2-5}
& \multirow{3}{*}{SoyLocal}
& Euclidean & $59.40 \pm 1.36$ & $67.63 \pm 1.67$ \\
& & sDTW & $58.85 \pm 1.34$ & $71.43 \pm 1.62$ \\
& & uDTW & $\mathbf{61.25 \pm 1.35}$ & $\mathbf{72.67 \pm 1.57}$ \\
\midrule
\multirow{6}{*}{Unsupervised} & \multirow{3}{*}{Cotton80}
& Euclidean & $51.77 \pm 1.29$ & $60.73 \pm 1.65$ \\
& & sDTW & $58.48 \pm 1.25$ & $69.27 \pm 1.59$ \\
& & uDTW & $\mathbf{60.12 \pm 1.27}$ & $\mathbf{72.30 \pm 1.57}$ \\
\cline{2-5}
& \multirow{3}{*}{SoyLocal}
& Euclidean & $50.53 \pm 1.45$ & $58.60 \pm 1.77$ \\
& & sDTW & $54.13 \pm 1.42$ & $64.30 \pm 1.67$ \\
& & uDTW & $\mathbf{56.78 \pm 1.40}$ & $\mathbf{64.83 \pm 1.69}$ \\
\bottomrule
\end{tabular}}
\vspace{-0.2cm}
\end{table}

\begin{table}[tbp]
\caption{Unsupervised few-shot classification accuracy (\%) on Omniglot and miniImageNet. % ($N$-way $K$-shot). % Best and second-best are in bold and underline.
}
\vspace{-0.2cm}
\centering
\resizebox{\linewidth}{!}{\begin{tabular}{lcccccccc}
\toprule
\multirow{3}{*}{Method} & \multicolumn{4}{c}{{Omniglot}} & \multicolumn{4}{c}{{miniImageNet}} \\
\cmidrule(lr){2-5} \cmidrule(lr){6-9}
& (5,1) & (5,5) & (20,1) & (20,5) & (5,1) & (5,5) & (5,20) & (5,50) \\
\midrule

CACTUs-MAML \cite{cactu_iclr19}
& 68.84 & 87.78 & 48.09 & 73.36
& 39.90 & 53.97 & 63.84 & 69.64 \\

CACTUs-ProtoNets \cite{cactu_iclr19}
& 68.12 & 83.58 & 47.75 & 66.27
& 39.18 & 53.36 & 61.54 & 63.55 \\

UMTRA \cite{umtra_nips19}
& \underline{83.80} & \textbf{95.43} & \textbf{74.25} & \textbf{92.12}
& 39.93 & 50.73 & 61.11 & 67.15 \\

LASIUM-MAML \cite{lasium_iclr21}
& 83.26 & \underline{95.29} & -- & --
& 40.19 & 54.56 & 65.17 & 69.13 \\

LASIUM-ProtoNets \cite{lasium_iclr21}
& 80.15 & 91.10 & -- & --
& 40.05 & 52.53 & 61.09 & 64.89 \\
\hdashline
Euclidean
& 82.20 & 93.09 & 62.61 & 81.32
& 76.92  & 91.21 & 93.93 & 94.47 \\

sDTW
& 83.13 & 94.55 & 67.69 & 87.30
& \underline{79.29} & \underline{92.96} & \underline{96.36} & \underline{96.96} \\

uDTW
& \textbf{83.84} & {94.88} & \underline{68.21}  & \underline{87.48}
& \textbf{80.18} & \textbf{93.16} & \textbf{96.55} & \textbf{97.13} \\

\bottomrule
\end{tabular}}
\label{tab:unsup-generic}
\vspace{-0.5cm}
\end{table}

\begin{table}[tbp]
\vspace{-0.5cm}
\centering
\caption{Cross-domain few-shot classification accuracy (\%) on CUB-200-2011, Stanford Cars, Cotton80, and SoyLocal. % ($N$-way $K$-shot).
}
\vspace{-0.2cm}
\resizebox{\linewidth}{!}{\begin{tabular}{lcccccccc}
\toprule
\multirow{2}{*}{Method} & \multicolumn{2}{c}{{CUB-200-2011}} & \multicolumn{2}{c}{{Stanford Cars}}& \multicolumn{2}{c}{{Cotton80}} & \multicolumn{2}{c}{{SoyLocal}} \\
\cmidrule(lr){2-3} \cmidrule(lr){4-5} \cmidrule(lr){6-7} \cmidrule(lr){8-9}
 & (5,5) & (5,20) & (5,5) & (5,20) & (5,1) & (5,2) & (5,1) & (5,2)\\
\midrule
Meta-GMVAE  \cite{metagmvae_iclr2021}& 47.48 & 54.08 & 31.39 & 38.36 & -- & -- & -- & --\\
Meta-SVEBM  \cite{svebm_nipsw21}& 45.50 & 54.61 & 34.27 & 46.23 & --& --& --& --\\
PsCo        \cite{psco_iclr23}& 57.38 & 68.58 & 44.01 & 57.50 & --& --& --& --\\
\hdashline
Euclidean    & 69.20 & 76.94 & 57.55 & 68.70 & 58.18 & 67.43 & 55.08 & 62.30 \\
sDTW        & 78.73 & 87.25 & 65.31 & 79.79 & 58.17 & 67.57 & 55.05 & 62.10 \\
uDTW        & \textbf{79.20} & \textbf{89.12} & \textbf{69.56} & \textbf{79.96} & \textbf{58.27} & \textbf{68.47} & \textbf{55.70} & \textbf{62.37}\\
\bottomrule
\end{tabular}}
\label{tab:unsup-cross-domain}
\vspace{-0.3cm}
\end{table}

\subsection{Few-shot Action Recognition}

% We evaluate uDTW in few-shot action recognition, where actions are classified based on similarity to a small number of labeled support examples.

We evaluate uDTW for few-shot action recognition, classifying actions from a few labeled support examples.

%\vspace{0.1cm}
\noindent
\textbf{Setup.} For NTU-120, we follow the standard one-shot protocol~\cite{Liu_2019_NTURGBD120}. Based on this, we construct a similar one-shot protocol for NTU-60, using 50 action classes for training and 10 for testing. We also evaluate on both 2D and 3D Kinetics-skeleton, splitting 200 actions for training and using the remainder for testing. Each video block is reshaped and resized to a $224 \times 224$ color image, and fed into MatchNets and ProtoNet to learn feature representations. % We compare uDTW with Euclidean, sDTW, sDTW div., and TAP.

%\vspace{0.1cm}
\noindent
\textbf{Quantitative results.} Tables~\ref{tab:ntu_combined} and~\ref{tab:kinetics} show that uDTW consistently outperforms sDTW and sDTW div. in both supervised and unsupervised settings. On Kinetics-skeleton, uDTW improves performance on 3D skeletons by 2.4\% (supervised) and 4.4\% (unsupervised). In the supervised setting, uDTW outperforms TAP by approximately 4\% on NTU-60 and 2\% on NTU-120. In the unsupervised setting, it surpasses sDTW by about 2\% and 3\% on NTU-60 and NTU-120, respectively.

\subsection{Few-shot Image Classification}

% \textbf{Cotton80 \& SoyLocal (Evaluation Protocol).} We adopt the Cotton80 and SoyLocal benchmarks from UFGVC dataset~\cite{ufgvc} for few-shot evaluation. These UFGVC datasets are highly data-scarce, containing only 240 samples across 80 classes (i.e., 3 samples per class). To accommodate this limitation, we follow a constrained episodic evaluation protocol, using 5-way 1-shot with 2 query samples per class, and 5-way 2-shot with 1 query sample per class.

% For the base/novel class partition, we use a deterministic splitting strategy. Specifically, we first fix the random seed to 0, and then sort all classes in a predefined order. Classes at odd indices are assigned to the novel set, while those at even indices are assigned to the base set. Once either set reaches the target size, all remaining classes are assigned to the other set. This results in a balanced split of 40 novel and 40 base classes, and no validation set is used.

% We evaluate uDTW on a range of few-shot supervised benchmarks, including generic image classification (CIFAR-FS, miniImageNet, tieredImageNet), fine-grained datasets (CUB-200-2011, Stanford Dogs, Stanford Cars, Aircraft, NABirds), and ultra-fine-grained visual classification (UFGVC) datasets (Cotton80, SoyLocal). 

\subsubsection{Quantitative Results}

We evaluate uDTW across generic, fine-, and ultra-fine-grained few-shot image classification.

%\vspace{0.1cm}
\noindent
\textbf{Supervised few-shot classification.} On generic benchmarks (Table \ref{tab:generic}), uDTW consistently achieves the highest accuracy for both 1-shot and 5-shot tasks, improving over the second-best methods (typically sDTW) by 0.5–2\%. This demonstrates its strength in capturing intra-class structural variations and aligning features across limited support samples. On fine-grained datasets (Table \ref{tab:finegrained}), uDTW generally outperforms prior methods, particularly in 1-shot scenarios. However, it occasionally ranks second, for example on Stanford Cars (compared to BTG-Net) and CUB-200-2011 (compared to AFS-FR), likely because these methods use highly discriminative pre-trained embeddings that already separate classes effectively, leaving limited room for additional gains from alignment. On UFGVC benchmarks (Table \ref{tab:untra-finegrained}), uDTW consistently attains the best results, though margins over sDTW are modest, reflecting the challenge of extremely low-data, ultra-fine-grained categories where subtle inter-class differences constrain further improvements. These patterns indicate that uDTW provides the largest benefit when intra-class variability is high, the number of support examples is small, or embeddings alone are insufficient for class separation.

\begin{figure*}[tbp]
\vspace{-0.2cm}
  \centering
  \includegraphics[width=\linewidth]{imgs/allinone.pdf}
  \vspace{-0.6cm}
  \caption{Comparison between DAAM attention and the learned uncertainty across MiniImageNet, Stanford Cars, CUB-200-2011, UFGVC, and Aircraft. 
  Each triplet shows the input image, DAAM attention (brighter = higher importance), and uDTW uncertainty (blue = low, red = high). Low-uncertainty regions consistently coincide with high-attention responses, evidencing the reverse-attention effect induced by our uncertainty-aware alignment, where reliable (semantically informative) regions are emphasized while ambiguous ones are suppressed.}
  \label{fig:vis-attn-uncert}
  \vspace{-0.3cm}
\end{figure*}

%\vspace{0.1cm}
\noindent
\textbf{Unsupervised few-shot classification.} 
Across generic benchmarks (Table \ref{tab:unsup-generic}), uDTW consistently attains the best or second-best accuracy across all % $N$-way $K$-shot 
settings, outperforming prior methods such as UMTRA and LASIUM. The improvements are particularly notable for higher-way and higher-shot tasks on miniImageNet, reflecting uDTW’s ability to align feature sequences effectively even without labels. On Omniglot, UMTRA slightly surpasses uDTW in some high-shot settings, likely because it generates synthetic tasks tailored to the dataset’s character-level structure, reducing the incremental benefit of alignment. In cross-domain scenarios (Table \ref{tab:unsup-cross-domain}), uDTW achieves the highest accuracy in all evaluated tasks, with the largest gains observed on Stanford Cars, where domain shift is more severe. This suggests that uDTW’s sequence alignment is especially effective for mitigating distribution mismatch, though on CUB-200-2011 the margin over sDTW is smaller, possibly because the domain shift is less extreme and embeddings already provide a strong prior.

\subsubsection{Qualitative Results} Fig. \ref{fig:vis-attn-uncert} highlights a consistent correspondence between token-level uncertainty and attention across datasets. Regions with low uncertainty align closely with high-response areas in DAAM attention maps, while less informative or ambiguous regions exhibit higher uncertainty. This shows a clear reverse-attention effect induced by our uncertainty-aware alignment, where semantically meaningful regions are automatically emphasized through low uncertainty, without relying on an explicit attention mechanism. Compared to attention, the learned uncertainty provides a more direct measure of correspondence reliability, leading to cleaner and more interpretable token importance. These observations support our formulation, demonstrating that uDTW captures semantic relevance via uncertainty and offers a principled alternative to similarity-based attention.

\section{Conclusion}

We introduced uncertainty-aware alignment, a probabilistic generalization of DTW that models heteroscedastic uncertainty in pairwise correspondences. Derived from a maximum-likelihood formulation, the resulting objective combines precision-weighted matching with log-variance regularization, enabling robust and interpretable alignment under noise and ambiguity.
The soft relaxation induces a Gibbs like distribution over alignment paths, forming structured coupling. This coupling can be viewed as a path-constrained transport plan and a globally consistent attention mechanism, where uncertainty acts as a reliability-aware modulation that suppresses ambiguous matches.
We further extended the framework to tokenized visual representations, where learned uncertainty induces a reverse-attention effect: informative regions exhibit low uncertainty and dominate the alignment, while noisy regions are attenuated. Experiments across diverse domains demonstrate consistent improvements and reveal that uncertainty correlates with semantic importance.

\bibliographystyle{IEEEtran}
\bibliography{eccv22}

% \newpage

\vskip -2.0\baselineskip plus -1fil

\begin{IEEEbiography}
[{\includegraphics[width=1in,height=1.25in,clip,keepaspectratio]{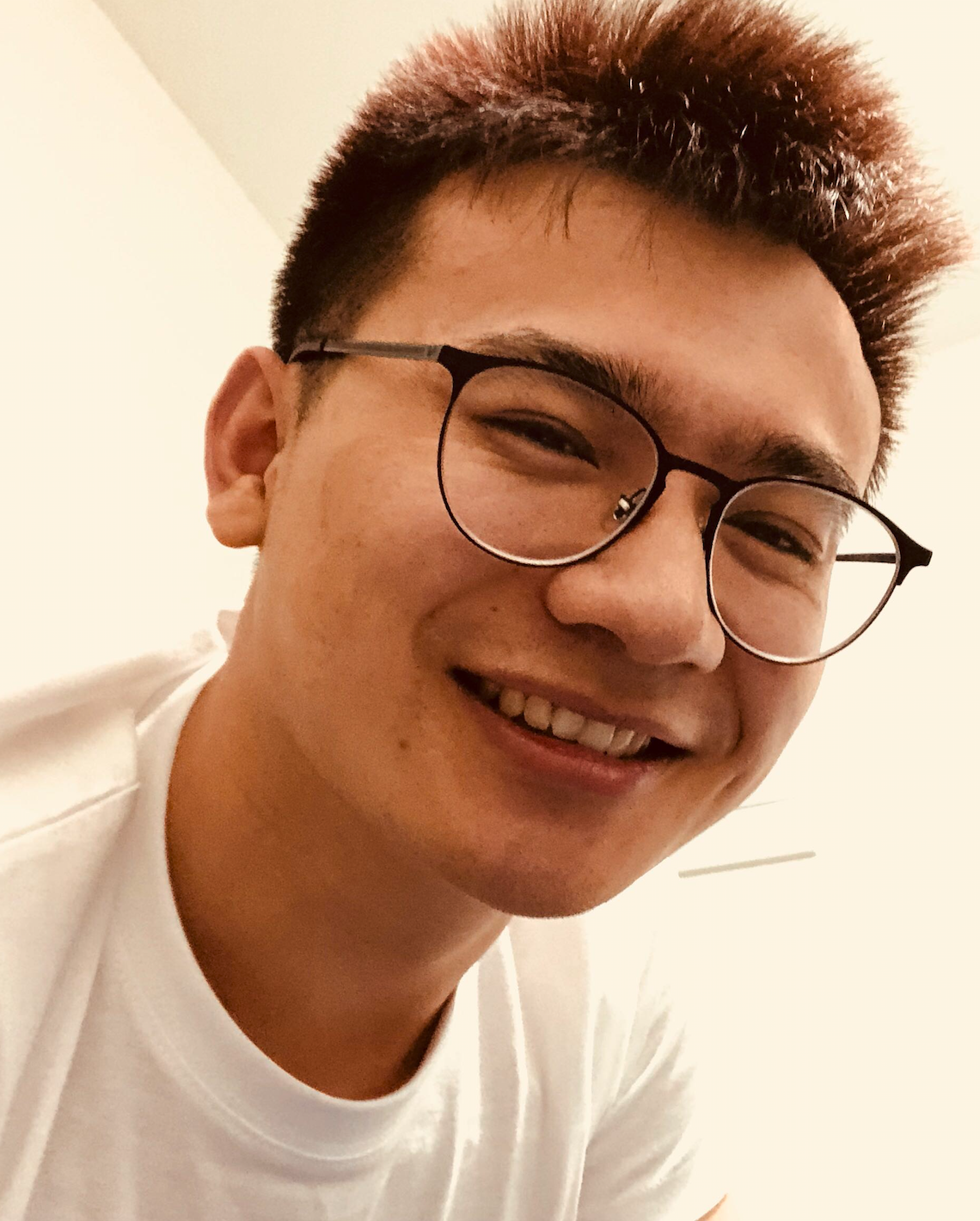}}]{Lei Wang} received his M.E. in Software Engineering from the University of Western Australia (UWA) in 2018 and his Ph.D. in Engineering and Computer Science from the Australian National University (ANU) in 2023. He is a Research Fellow in the School of Electrical and Electronic Engineering at Griffith University and a Visiting Scientist with Data61/CSIRO. He leads the Temporal Intelligence and Motion Extraction (TIME) Lab at Griffith University. He previously held research positions at ANU, UWA, and Data61/CSIRO. His research focuses on motion-, data-, and model-centric approaches to action recognition and anomaly detection. 
He has authored numerous first-author papers in top-tier venues, including CVPR, ICCV, ECCV, ACM Multimedia, NeurIPS, ICLR, ICML, AAAI, TPAMI, IJCV, and TIP, and received the Sang Uk Lee Best Student Paper Award at ACCV 2022. He serves as an Area Chair for NeurIPS 2026, ACM Multimedia 2024-2026, ICASSP 2025, and ICPR 2024, and was recognized as an Outstanding Area Chair at ACM Multimedia 2024.
\end{IEEEbiography}

\vskip -2.0\baselineskip plus -1fil

\begin{IEEEbiography}
[{\includegraphics[trim=0 0 0 300, width=1in,height=1.25in,clip,keepaspectratio]{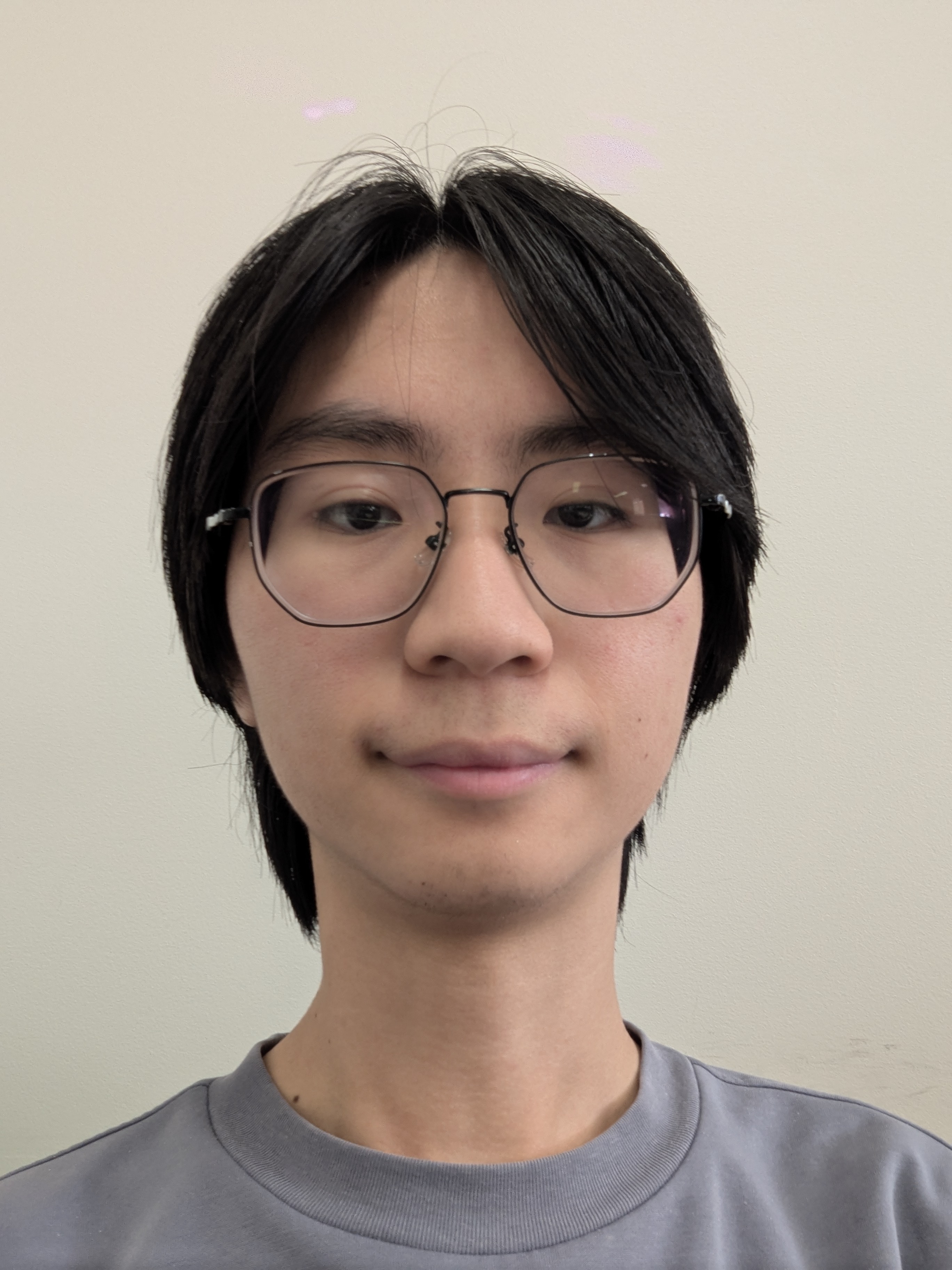}} % {\includegraphics[trim=0 150 0 160, width=1in,height=1.25in,clip,keepaspectratio]{bibs/syuan-hao.JPG}}
]{Syuan-Hao Li} received the B.S. degree in Computer Science, National Taitung University (NTTU), Taiwan, in 2025. He is currently a Ph.D. pathway student at Griffith University and a research intern at the Temporal Intelligence and Motion Extraction (TIME) Lab. He serves as a workshop coordinator for TIME 2026: the 2nd International Workshop on Transformative Insights in Multi-faceted Evaluation, hosted at the Web Conference (WWW 2026). His research interests include temporal modeling, multimodal intelligence, and fine- and ultra-fine-grained visual understanding.
\end{IEEEbiography}

\vskip -2.0\baselineskip plus -1fil

\begin{IEEEbiography}[{\includegraphics[trim=0 0 0 0, width=1in,height=1.25in,clip,keepaspectratio]{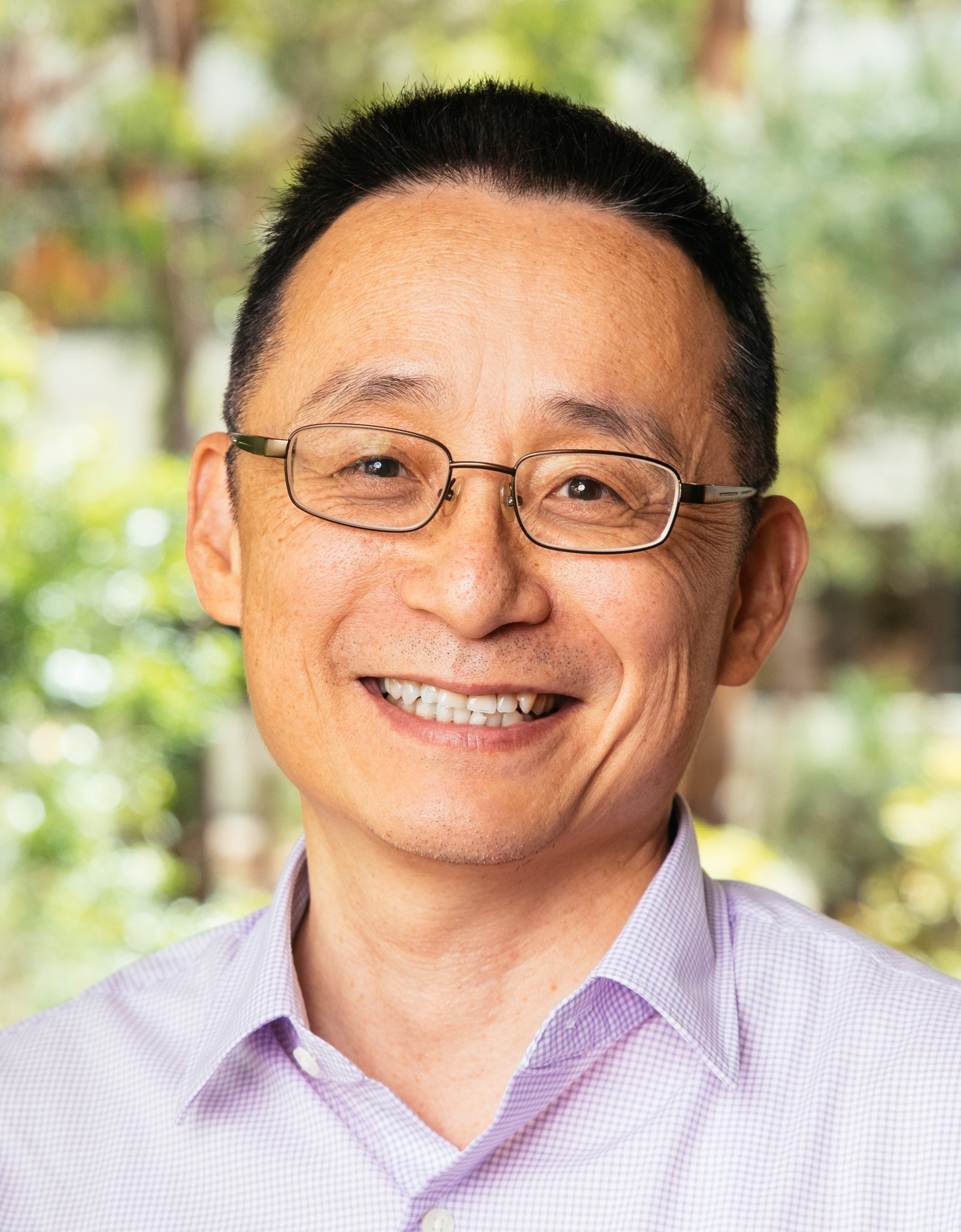}}% {\includegraphics[trim=40 50 30 5, width=1in,height=1.25in,clip,keepaspectratio]{bibs/yongsheng.png}}
]{Yongsheng Gao} received the BSc and MSc degrees in Electronic Engineering from Zhejiang University, China, in 1985 and 1988, respectively, and the PhD degree in Computer Engineering from Nanyang Technological University, Singapore. He is currently a Professor with the School of Engineering and Built Environment, Griffith University, and Director of the ARC Research Hub for Driving Farming Productivity and Disease Prevention, Australia. He was previously the Leader of the Biosecurity Group at the Queensland Research Laboratory, National ICT Australia (ARC Centre of Excellence), a consultant at Panasonic Singapore Laboratories, and an Assistant Professor at Nanyang Technological University. His research interests include smart farming, machine vision for agriculture, biosecurity, face recognition, biometrics, image retrieval, computer vision, pattern recognition, environmental informatics, and medical imaging. He is a recipient of the 2025 ARC Industry Laureate Fellow.\end{IEEEbiography}

\vskip -2.0\baselineskip plus -1fil

\begin{IEEEbiography}[{\includegraphics[trim=80 80 30 0,width=1in,height=1.25in,clip,keepaspectratio]{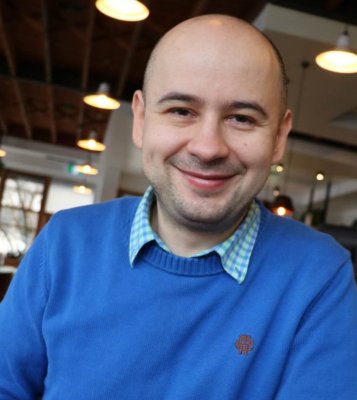}}]{Piotr Koniusz} received the BSc degree in Telecommunications and Software Engineering from Warsaw University of Technology, Poland, in 2004, and the PhD degree in Computer Vision from CVSSP, University of Surrey, U.K., in 2013. He is an Associate Professor in Theoretical ML at the University of New South Wales (UNSW) and a Principal (now Visiting) Researcher with the Machine Learning Research Group, Data61/CSIRO. He was previously a postdoctoral researcher with the LEAR team at INRIA, France. 
His research interests include representation learning (contrastive and self-supervised learning, unlearning), vision-language models, MLLMs, and deep and graph neural networks, as well as Machine Learning Safety. He has received awards including the Sang Uk Lee Best Student Paper Award (ACCV 2022), Runner-up APRS/IAPR Best Student Paper Award (DICTA 2022), and Outstanding Area Chair recognition (ICLR 2021--2023). He served as a Program Chair for NeurIPS 2025 and serves as a Senior Workshop Program Chair for NeurIPS 2026, and a Journal Track Chair for ACML 2026.\end{IEEEbiography}

\end{document}